\documentclass{article}

\PassOptionsToPackage{authoryear}{natbib}

 \usepackage[preprint]{neurips_2026}

\usepackage[utf8]{inputenc} 
\usepackage[T1]{fontenc}    
\usepackage{hyperref}       
\usepackage{url}            
\usepackage{booktabs}       
\usepackage{amsfonts}       
\usepackage{nicefrac}       
\usepackage{microtype}      
\usepackage{xcolor}         
\usepackage{pifont}         

\usepackage{amsfonts}       
\usepackage{amsmath}        
\usepackage{nicefrac}       
\usepackage{array}          
\newcolumntype{C}[1]{>{\centering\arraybackslash}p{#1}}
\usepackage{microtype}      
\usepackage{graphicx}       
\usepackage{subcaption}     
\usepackage{verbatim}       
\usepackage{makecell}
\usepackage{pgfplots}
\pgfplotsset{compat=1.18}
\usepackage{caption}
\usepackage{multirow}
\usepackage{algorithm}
\usepackage{algpseudocode}
\usepackage{algorithmicx}
\usepackage{wrapfig}
\usepackage{bm}

\usepackage{amssymb}
\usepackage{mathtools}
\usepackage{amsthm}
\hypersetup{
    colorlinks=true, 
    linkcolor=gray,  
    citecolor=gray,  
    urlcolor=blue,   
    pdfborder={0 0 0}, 
}
\newtheorem{theorem}{Theorem}[section]
\newtheorem{proposition}[theorem]{Proposition}

\newtheorem{corollary}[theorem]{Corollary}

\title{Recursive Flow Matching}

\author{%
  Jiahe Huang$^1$\quad Sihan Xu$^2$\quad Sharvaree Vadgama$^1$\quad Rose Yu$^1$\\
  $^1$University of California, San Diego\quad
  $^2$University of Michigan\\
  \texttt{\{chh118, svadgama, roseyu\}@ucsd.edu} \quad
  \texttt{sihanxu@umich.edu}
}

\begin{document}

\maketitle

\begin{abstract}
    Generative models have emerged as a powerful paradigm for solving physics systems and modeling complex spatiotemporal dynamics. However, achieving high physical accuracy without incurring high computational cost remains a fundamental challenge, as existing approaches face a critical speed-fidelity trade-off.
    In this work, we introduce \textbf{Recursive Flow Matching} (\textit{RecFM}), a generative framework for forecasting complex spatiotemporal dynamics.  RecFM enforces self-consistency to align trajectories across discretization scales, reducing discretization errors and improving performance across metrics for physics-based tasks. To our knowledge, this is the first method to achieve high-fidelity one- and few-step (\textit{2-4 step}) dynamic generation for scientific systems with performance comparable to state-of-the-art multi-step solvers. 
    Across challenging scientific benchmarks, RecFM achieves up to a $20\times$ speedup over leading diffusion-based emulators while improving predictive accuracy. Furthermore, RecFM reduces mean squared error by over $15\%$ compared to vanilla flow matching, offering a scalable and efficient solution for real-time scientific emulation.
    Project page: \href{https://jhhuangchloe.github.io/RecFM/}{jhhuangchloe.github.io/RecFM/}.
\end{abstract}

\section{Introduction}\label{sec:intro}

Predicting the evolution of physical systems is a fundamental challenge in scientific computing, with applications ranging from fluid dynamics to climate modeling and weather forecasting.
Traditional numerical solvers provide high-fidelity solutions \cite{dhatt2012finite, cantwell2015nektar++}, but are typically computationally expensive and impractical for real-time or large-scale deployment. These limitations motivate the need for data-driven approaches that can efficiently model complex, high-dimensional dynamics.
With advancements in scientific machine learning approaches like neural operators \cite{kovachki2023neural, li2020fourier, lu2021learning} and PINNs \cite{raissi2019physics, penwarden2022multifidelity} are widely used to simulate systems described by partial differential equations (PDEs). However, in real-world applications, these governing equations are frequently incomplete, computationally prohibitive, or challenging to formulate for complex and stochastic systems such as climate dynamics. 

Recent advances in generative modeling provide a powerful framework for learning high-frequency data distributions tailored to scientific applications, addressing key challenges in molecular design \cite{abramson2024accurate, shen2025simultaneous}, material generation \cite{zeni2025generative}, and climate modeling \cite{duncan2025samudrace, watt2025ace2}. In these fields, the ability of generative models to quantify uncertainty and manage sparse or irregular measurements offers significant advantages over traditional deterministic methods. Especially in computational physics, generative methods have been shown to reconstruct spatiotemporal dynamics from limited observations, such as turbulent fluid flow or atmospheric models, effectively bridging the gap between inductive statistical learning and deductive physical laws \cite{cachay2025elucidated, huang2024diffusionpde, ruhling2023dyffusion, zhuang2025spatially}. Nevertheless, deploying these models for accurate dynamical prediction remains challenging, as they must balance efficiency with the preservation of physical fidelity over time.

A key limitation of diffusion-based models is their inherently iterative inference procedure, which requires tens to hundreds of sequential denoising steps to produce high-quality predictions \cite{ho2020denoising, karras2022elucidating, song2020denoising, nichol2021improveddenoisingdiffusionprobabilistic}. This results in significant computational overhead, especially for time-dependent simulations. To address this issue, continuous normalizing flows (CNFs) \cite{mathieu2020riemannian} and flow matching (FM) \cite{chen2024flowmatchinggeneralgeometries, geng2025mean, lipman2022flow} have emerged as efficient alternatives, learning continuous vector fields that define probability paths without requiring simulation during training. While these approaches reduce the number of required function evaluations, a fundamental trade-off remains: reducing the number of inference steps often leads to degraded accuracy and instability, particularly in long-term dynamical rollouts.

To further accelerate these systems, a wide range of approaches have been proposed, including consistency models and distillation-based methods \cite{tauberschmidt2025physics, xu2023cyclenet}. Consistency models, such as Shortcut Diffusion \cite{frans2024one}, introduce self-consistency constraints that enable direct mapping along the probabilistic path in a single step, while distillation techniques aim to compress multi-step generation into an efficient student model \cite{song2024multi}. However, a key challenge in these approaches is preserving the spectral richness and spatiotemporal fidelity of physical fields, as aggressive step reduction often smooths out high-frequency structures that are critical for accurate scientific simulations \cite{xu2025understanding}.
These limitations highlight the need for a framework that can achieve efficient few-step (typically \emph{at most four steps}) generation while maintaining trajectory fidelity and stability. 

\begin{figure}[t]
    \centering
    \begin{subfigure}[t]{0.45\linewidth}
        \centering
        \includegraphics[width=\linewidth]{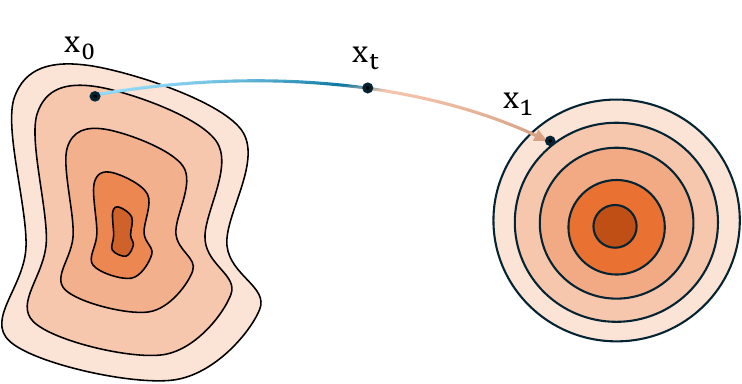}
        \caption{Flow Matching}
        \label{fig:fm}
    \end{subfigure}
    \hfill
    \begin{subfigure}[t]{0.45\linewidth}
        \centering
        \includegraphics[width=\linewidth]{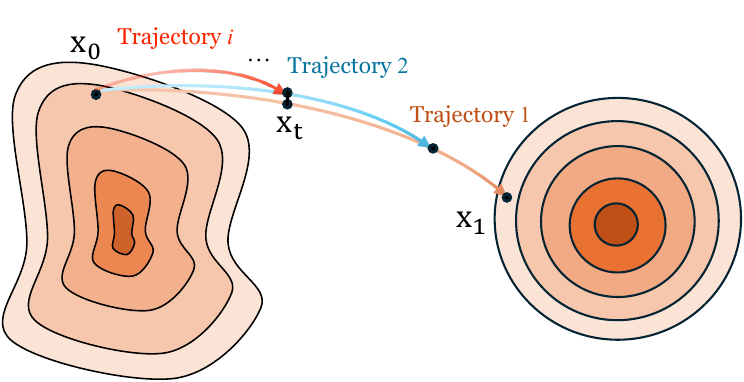}
        \caption{Recursive Flow Matching}
        \label{fig:rfm}
    \end{subfigure}
    \caption{\textbf{Comparison of flow matching paradigms.} 
    (a) Flow Matching (FM) learns a direct trajectory that transports samples from the data distribution ($x_0$) to the noise distribution ($x_1$). 
    (b) Recursive Flow Matching (RecFM) augments this with recursively scaled trajectories (brown, blue, and red arrows) that intersect at shared spatial states ($x_t$), enabling cross-scale trajectory alignment and consistency training along the flow.}
    \label{fig:comparison_fm_rfm}
    \vspace{-1em}
\end{figure}

To address these challenges, we introduce \textbf{Recursive Flow Matching} (\textit{RecFM}), a generative framework for stable and efficient modeling of dynamical systems. Instead of relying on a single discretized trajectory, RecFM recursively models a family of trajectories spanning different inference-time traversal scales and enforces consistency among them.
In particular, trajectories at different scales are coupled by aligning states that correspond to the same underlying point along the path, ensuring that predictions remain coherent across discretizations. This multi-scale coupling provides additional supervision and improves stability in one- or few-step regimes.
Our main contributions include:
\vspace{-0.5em}
\begin{itemize}
\item \textbf{Recursive Flow Matching:} A novel flow matching framework for forecasting complex physical dynamics, 
enabling a unified treatment of systems governed either by explicit PDE formulations or by implicitly learned data-driven dynamics.
\vspace{-0.2em}
\item \textbf{Multi-Scale Trajectory Alignment:} A mechanism that enforces consistency of trajectories across sampling scales, stabilizing dynamical rollouts and mitigating error accumulation over multiple inference steps.
\vspace{-0.2em}
\item \textbf{High-Efficiency Emulation:} We validate our approach on both simulated and real-world physical dynamics prediction benchmarks, achieving state-of-the-art accuracy with substantially fewer sampling steps. 
\end{itemize}
\vspace{-0.7em}

\section{Background}
In this section, we introduce the necessary background for our proposed \textit{RecFM}. We briefly review generative and trajectory flow matchings, which form the core building blocks of our framework.
\vspace{-.7em}
\subsection{Flow Matching}

Flow Matching (FM) \cite{lipman2022flow} is a simulation-free paradigm for training Continuous Normalizing Flows by regressing onto a target vector field. Let $p_0$ denote the target data distribution and $p_1$ denote a tractable source distribution (\textit{e.g.}, a standard Gaussian). FM seeks to learn a time-dependent vector field $v_t(x; \theta): \mathbb{R}^d \to \mathbb{R}^d; t\in [0,1]$ that defines a probability path $p_t$ connecting $p_0$ and $p_1$. The transformation of a sample $x_0 \sim p_0$ to $x_1 \sim p_1$ is governed by the ordinary differential equation (ODE):
\begin{equation}
    \frac{d\psi_t(x)}{dt} = v_t(\psi_t(x), t), \quad \psi_0(x) = x_0
\end{equation}
where $\psi_t$ represents the flow map. To ensure tractability, Conditional Flow Matching (CFM) utilizes a per-sample regression objective:
\begin{equation}
    \mathcal{L}_{\text{CFM}}(\theta) = \mathbb{E}_{t \sim U, x_0 \sim p_0, x_1 \sim p_1} [\|v_t(x_t, t; \theta) - u_t(x_t | x_0, x_1)\|^2]
\end{equation}
where $u_t(x_t | x_0, x_1)$ is the conditional velocity field. A prevalent choice is the Optimal Transport (OT) path, which utilizes linear interpolation $x_t = (1-t)x_0 + tx_1$ to yield a constant target velocity $u_t(x_t | x_0, x_1) = x_1 - x_0$. 
Although this formulation fully specifies the generative process, its practical performance is largely determined by the structure of the induced trajectories, motivating a closer examination of trajectory design.

The choice of trajectory (\textit{i.e.}, the transport map) plays a key role in determining sampling efficiency and stability. Approaches focusing on Trajectories of Flow Matching  \cite{zhang2025trajectoryflowmatchingapplications, islam2025longitudinalflowmatchingtrajectory}, to parameterize the drift and diffusion terms to model stochastic and irregularly sampled time series. From a physical perspective, such trajectories can be interpreted as approximations of the underlying system dynamics, where geometric simplicity contributes to stable and accurate generation. Yet existing methods fail to maintain consistency across discretization scales, compromising both accuracy and physical fidelity.

\vspace{-.7em}
\subsection{Self-Consistency and the Flow Map}
\vspace{-0.3em}
To overcome the iterative bottleneck of FM, recent work has introduced the self-consistency property \cite{frans2024one, xu2023inversion}. For a flow map $\mathbf{X}_{s,t}: \mathbb{R}^d \to \mathbb{R}^d$ that transports a state from time $s$ to time $t$, self-consistency requires that all points along a single trajectory map to the same endpoint. This is formally described by the semigroup condition:$$\mathbf{X}_{u,t}(\mathbf{X}_{s,u}(x)) = \mathbf{X}_{s,t}(x)$$for all $s, u, t$ such that $0 \le s \le u \le t \le 1$ where $u$ is an intermediate timestep. In ``one-step'' models, a consistency function $f_\theta(x_t, t)$ is trained to satisfy $f_\theta(x_t, t) = x_1$ for all $t \in [0,1]$. 
By executing this condition, the model ensures that the generated path remains unchanged, whether it is traversed in a single large step or in multiple smaller increments. This is an advanced regularization that can ``straighten'' the ODE trajectory and minimize the common truncation errors in accelerator solvers.

\begin{wrapfigure}{r}{0.27\linewidth}
    \vspace{-4em}
    \centering
    \includegraphics[width=0.78\linewidth]{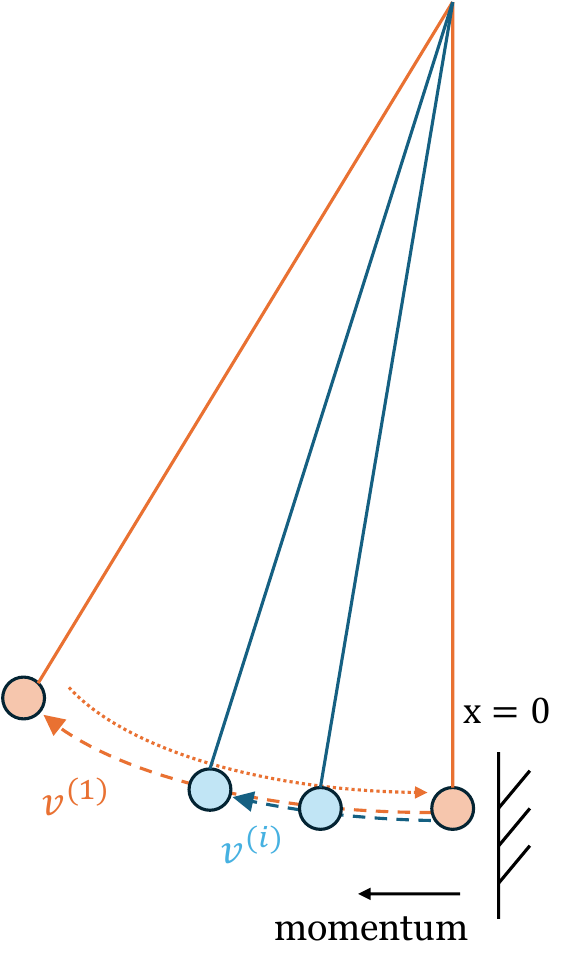}
    \vspace{-0.5em}
    \caption{Pendulum trajectories and velocities for the primary trajectory ($v^{(1)}$, \textcolor{orange}{orange}) and attenuated trajectories ($v^{(i)}$, $i>1$, \textcolor{cyan}{blue}).}
    \label{fig:pendulum}
    \vspace{-1.0 em}
\end{wrapfigure}

\vspace{-0.7em}
\section{Recursive Flow Matching}\label{sec:methods}

\label{subsection:wall_bouncing}

\vspace{-0.3em}
We draw inspiration from the recursive movement of an ideal\footnote{We do not consider energy loss due to friction or drag forces.} wall-bouncing pendulum to design our method, \textit{RecFM}. Below, we introduce the pendulum model, followed by the secondary trajectory formulation and the updated loss function for RecFM.

\vspace{-0.5em}
\subsection{Physics Intuition}
\vspace{-0.3em}

Let's consider the classical physics toy problem of a 1D wall-bouncing pendulum, illustrated in Figure~\ref{fig:pendulum}. Let 
$x(t)$ and $v(t)$ be the position and velocity of the pendulum at time $t$
respectively. Away from the wall at $x=0$, the pendulum travels at \textbf{constant speed} governed by: 
\begin{equation*}
    \dot{x}(t) = v(t), \quad \dot{v}(t) = 0
\end{equation*}
A bounce occurs every time the pendulum strikes the wall ($x = 0$), resulting in a set of trajectories. At each collision, 
the velocity reverses direction and its magnitude is reduced, with a fraction $1 - \alpha^2$ of the kinetic energy lost, where $\alpha \in [0, 1]$ is the velocity retention coefficient. For simplicity, we consider velocities along a fixed direction (\textit{e.g.}, from the wall toward the turning point), so that only their magnitudes are tracked across bounces.
Let $v^{(i)}$ denote the velocity magnitude immediately after the $i$-th bounce. The collision update rule is:
\begin{equation}
    v^{(i+1)} = \alpha\, v^{(i)}.
\end{equation}

We assume a constant half-cycle duration across scales, consistent with small-angle dynamics, so that amplitude shrinks proportionally with velocity after each bounce. While not strictly physical, this yields a simple and tractable parameterization across trajectories.

After $D-1$ collisions, we obtain a family of trajectories $\{v^{(i)}\}_{i=1}^D$ with progressively attenuated velocities. Writing $\bm{v}^* := v^{(1)}$ and $\alpha^{(i)} := \alpha^{i-1}$, we obtain the scaling relation
\begin{equation}
    v^{(i)} = \alpha^{(i)}\, \bm{v}^*.
\end{equation}
This velocity consistency defines a natural supervision signal for our multi-scale objective.

Figure~\ref{fig:comparison_fm_rfm} illustrates the key idea behind our approach. 
While vanilla flow matching learns a single trajectory between $x_0$ and $x_1$, 
we extend this formulation by introducing additional trajectories at different 
scales. Building on this intuition, we propose \textbf{Recursive Flow Matching} 
(\textit{RecFM}), which enforces consistency across these trajectories.

\vspace{-0.5em}
\subsection{RecFM Algorithm}
\vspace{-0.5em}
\paragraph{Formulation.}
Given a data sample $x_0 \sim p_0$ and a noise sample $x_1 \sim p_1$, we define the standard linear interpolant
\begin{equation}
x_t = (1-t)\,x_0 + t\,x_1, \qquad \bm{v}^\ast = x_1 - x_0.
\end{equation}
The velocity network $v_\theta(x, t, \alpha)$ is conditioned on both
time $t$ and scale $\alpha$, so that a single model represents the
entire family of trajectories. 

Consider a recursive formulation with depth $D$, where $D$ trajectories are defined by time-scale pairs $\{(\tau^{(i)}, \alpha^{(i)})\}_{i=1}^{D}$. The rescaled time is defined as $\tau^{(i)} = t / \alpha^{(i)} \in [0,1]$, with $\alpha^{(1)} = 1$ and $\tau^{(1)} = t$. Let $\hat{v}^{(i)} = v_\theta(x_t, \tau^{(i)}, \alpha^{(i)})$ denote the predicted velocity of the $i$-th trajectory.
Under this alignment, all trajectories pass through the same spatial point $x_t$, yielding the cross-scale consistency relation
\begin{equation}
    \hat{v}^{(i+1)} = \alpha\, \hat{v}^{(i)}.
\end{equation}
This shared spatial point, which is visited by trajectories of different scales at correspondingly aligned times, is the structural property RecFM exploits.

\vspace{-0.8em}
\paragraph{Algorithm.}
We present Algorithm~\ref{alg:train}, which trains a velocity network on $D$ recursive trajectories passing through the same point $x_t$: a primary trajectory (\textit{i.e.}, $i=1$) that learns the standard noise-to-data velocity $x_1 - x_0$, and $D-1$ time-rescaled secondary trajectories parameterized by $\alpha^{(i)}$, whose target velocities are given by $\alpha^{(i)}(x_1 - x_0)$, inspired by the wall-bouncing dynamics in Section~\ref{subsection:wall_bouncing}.

\begin{algorithm}[b]
\caption{Recursive Trajectory Training with Consistency Alignment}
\label{alg:train}
\begin{algorithmic}[1]
\Require Data distribution $p_0$, Noise distribution $p_1$
\Require Velocity network $v_\theta(x,t,\alpha)$, recursion depth $D$
\Require Consistency weight $\lambda$, total training iterations $N$

\For{iteration $n = 1$ to $N$}
    \State Sample $x_0 \sim p_0$ and $x_1 \sim p_1$ \Comment{Data and noise samples}
    \State Sample $t \sim \mathcal{U}(0,1)$ and $\alpha \sim \mathcal{U}(t,1)$ \Comment{Primary trajectory time and base recursion scale}
    
    \State $\bm{v}^* \gets x_1 - x_0$ \Comment{Ground-truth primary velocity}
    \State $x_t \gets (1-t)x_0 + t x_1$ \Comment{Shared spatial point}

    \For{$i = 1$ to $D$}
        \State $\alpha^{(i)} \gets \alpha^{i-1}$  \Comment{Recursive trajectory scale}
        \State $\tau^{(i)} \gets t / \alpha^{(i)}$ \Comment{Aligned trajectory time}
        \State $\hat{v}^{(i)} \gets v_\theta(x_t, \tau^{(i)}, \alpha^{(i)})$ \Comment{Predicted trajectory velocity}
        \State $\mathcal{L}_{\text{traj}}^{(i)} \gets \|\hat{v}^{(i)} - \alpha^{(i)} \bm{v}^*\|_2^2$ \Comment{Trajectory supervision}
    \EndFor

    \For{$i = 2$ to $D$}
        \State $\mathcal{L}_{\text{cons}}^{(i)} \gets \|\hat{v}^{(i)} - \alpha^{(i)} \hat{v}^{(1)}\|_2^2$ \Comment{Cross-scale consistency}
    \EndFor

    \State $\mathcal{L}_{\text{total}} \gets \sum_{i=1}^{D} \mathcal{L}_{\text{traj}}^{(i)}
    + \lambda \sum_{i=2}^{D} \mathcal{L}_{\text{cons}}^{(i)}$

    \State Update $\theta$ using $\nabla_\theta \mathcal{L}_{\text{total}}$
\EndFor
\end{algorithmic}
\end{algorithm}

\vspace{-0.5em}
\paragraph{Training Objective.}
To enforce alignment across trajectory scales, we build on the recursive formulation above. 
The overall training objective aggregates supervision across all scales and enforces consistency with the primary trajectory: \vspace{-0.1em}
\begin{equation}\label{eq:training_objective}
\begin{aligned}
\mathcal{L}_{\text{total}}
= \sum_{i=1}^{D} \mathcal{L}_{\text{traj}}^{(i)} 
&+ \lambda \sum_{i=2}^{D} \mathcal{L}_{\text{cons}}^{(i)} \\
\quad \text{where } 
\mathcal{L}_{\text{traj}}^{(i)}
= \left\|\hat{v}^{(i)} - \alpha^{(i)}\bm{v}^*\right\|_2^2,& \;
\mathcal{L}_{\text{cons}}^{(i)}
= \left\|\hat{v}^{(i)} - \alpha^{(i)} \hat{v}^{(1)}\right\|_2^2.
\end{aligned}
\end{equation}

\vspace{-0.9em}
\paragraph{Inference Sampling.}

Inference in RecFM is conducted by numerically solving the ODE defined by the learned velocity field $\hat{v}_\theta(x_t, t, \alpha)$. For single-step generation, using a first-order Euler step of size $h$, RecFM maps a noise sample $x_1 \sim p_1$ to the data manifold in one function evaluation:
\begin{equation}
    x_0 \approx x_1 - h\,\hat{v}_\theta(x_1, 1, 1)
\end{equation}
where $h = 1$, corresponding to integrating the trajectory over the full time horizon.

For multi-step generation, discretizing the trajectory into $K$ steps $1 = t_0 > \dots > t_K = 0$ with step sizes $h_k = t_{k-1} - t_k$, we iteratively update:
\begin{equation}
    x_{t_k} = x_{t_{k-1}} - h_k\,\hat{v}_\theta(x_{t_{k-1}}, t_{k-1}, 1), \quad k = 1, \dots, K.
\end{equation}
By enforcing cross-scale velocity consistency during training, RecFM learns trajectories that remain stable under larger integration steps, enabling accurate few-step generation.

\vspace{-0.5em}
\subsection{Theoretical results}
\vspace{-0.3 em}
We present Theorem \ref{thm:truncation} to show that adding recursive trajectories and cross-scale trajectory consistency loss accelerates the convergence of RecFM.
\begin{theorem}[Truncation Error Reduction via Trajectory Straightening]
\label{thm:truncation}
Let $\hat{v}_\theta(x, t, \alpha)$ be the predicted velocity and $\mathbf{a}(x,t) = \partial_t\, v_{\theta}(x,t,1) + (\nabla_x v_{\theta})\, v_{\theta}(x,t,1)$ denote the trajectory acceleration. The $K$-step Euler generation error with step size $h = 1/K$ satisfies
    \begin{equation}\label{eq:euler}
        \left\|\psi_1 - \hat{\psi}_1\right\| \;\leq\; \frac{h}{2}\,\frac{e^{L} - 1}{L}\,\sup_{t\in[0,1]} \left\|\mathbf{a}(\psi_t, t)\right\|,
    \end{equation}
    where $L = \sup_t \|\nabla_x v_{\theta}(\cdot, t, 1)\|_{\mathrm{op}}$. The acceleration decomposes into a temporal component and an advective term, $\mathbf{a} = \partial_t v_\theta + (\nabla_x v_\theta)\,v_\theta$. Minimizing $\mathcal{L}_{\textup{cons}}$ enforces the cross-scale consistency condition
    \begin{equation}\label{eq:pde}
        t\,\partial_t\, v_\theta(x, t, 1) \;+\; v_\theta(x, t, 1) \;=\; \partial_\alpha\, v_\theta(x, t, 1),
    \end{equation}
    which constrains $\|\partial_t v_\theta\|$ and thereby reduces $\|\mathbf{a}\|$, tightening~\eqref{eq:euler}. 
\end{theorem}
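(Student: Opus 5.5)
The proof has two independent parts. First, the Euler bound \eqref{eq:euler}, which is the classical global error estimate for the explicit Euler method. Second, the derivation of the cross-scale condition \eqref{eq:pde} from the consistency loss, obtained by differentiating the minimizer's constraint in $\alpha$ at $\alpha=1$.

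\textbf{Euler bound.} Let $\psi_t$ be the exact flow of $v_\theta(\cdot,t,1)$ and $\hat\psi$ the Euler iterates on the grid $t_k = kh$. Write $e_k = \psi_{t_k}-\hat\psi_{t_k}$.
\begin{enumerate}
\item Taylor-expand $\psi$ along the exact trajectory. Since $\frac{d}{dt}v_\theta(\psi_t,t,1) = \mathbf{a}(\psi_t,t)$, the local truncation error of one step obeys
\begin{equation*}
\Bigl\|\psi_{t_{k+1}}-\psi_{t_k}-h\,v_\theta(\psi_{t_k},t_k,1)\Bigr\| \le \tfrac{h^2}{2}M, \qquad M=\sup_{t}\|\mathbf{a}(\psi_t,t)\|.
\end{equation*}
\item Use the Lipschitz constant $L$ of $v_\theta$ in $x$. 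Subtracting the Euler step gives the recursion $\|e_{k+1}\|\le(1+hL)\|e_k\|+\tfrac{h^2}{2}M$.
\item Unroll the recursion with $e_0=0$. This gives $\|e_K\|\le \tfrac{h^2M}{2}\sum_{j<K}(1+hL)^j = \tfrac{hM}{2}\,\frac{(1+hL)^K-1}{L}$.
\item Since $(1+hL)^K\le e^{KhL}=e^{L}$, this yields \eqref{eq:euler}. A discrete Grönwall lemma would do equally well; nothing in this part depends on RecFM.
\end{enumerate}

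\textbf{Consistency condition.} Take the population minimizer of $\mathcal{L}_{\text{cons}}$, idealized as driving it to zero for every $(x,t,\alpha)$ in the sampled range. For $D=2$ this means
\begin{equation*}
v_\theta(x,t/\alpha,\alpha)=\alpha\, v_\theta(x,t,1) \quad \text{for all } \alpha\in(t,1].
\end{equation*}
Define $F(\alpha)=v_\theta(x,t/\alpha,\alpha)-\alpha v_\theta(x,t,1)$, so $F\equiv 0$. Differentiating at $\alpha=1$ gives
\begin{equation*}
-t\,\partial_t v_\theta(x,t,1)+\partial_\alpha v_\theta(x,t,1)-v_\theta(x,t,1)=0.
\end{equation*}
Rearranging is exactly \eqref{eq:pde}. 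For deeper recursions $i>2$ the same identity arises, because each $\alpha^{(i)}=\alpha^{i-1}$ is a reparametrization of the same one-parameter family.

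The content of this step is that $\partial_t v_\theta = (\partial_\alpha v_\theta - v_\theta)/t$. The temporal part of $\mathbf{a}$ is therefore tied to the scale derivative. Meanwhile, the trajectory loss forces $v_\theta(x,t/\alpha,\alpha)\approx\alpha\,\mathbb{E}[\bm v^*\mid x_t=x]$, which makes $\partial_\alpha v_\theta\approx v_\theta$ along the family. Hence $t\,\partial_t v_\theta\approx 0$, so $\|\partial_t v_\theta\|$ is small for $t$ bounded away from $0$. Plugging this into $\mathbf{a}=\partial_t v_\theta+(\nabla_x v_\theta)v_\theta$ reduces $M$, which tightens \eqref{eq:euler}.

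\textbf{Main obstacle.} The rigorous part is routine; the difficulty lies in making ``constrains $\|\partial_t v_\theta\|$ and thereby reduces $\|\mathbf{a}\|$'' precise.
\begin{itemize}
\item Exact zero loss is an idealization. With nonzero loss one only gets an $L^2$ bound on the residual of \eqref{eq:pde}, and converting that into the sup bound needed in \eqref{eq:euler} requires regularity assumptions, such as a Lipschitz network with controlled derivatives.
\item The advective term $(\nabla_x v_\theta)v_\theta$ is not directly controlled by \eqref{eq:pde}.
\item The relation degenerates at $t\to 0$.
\end{itemize}
My plan is to state this last part as a qualitative or conditional claim: under the consistency constraint together with the trajectory loss, the bound on $\|\partial_t v_\theta\|$ holds with an explicit dependence on the residual. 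I would not attempt a full quantitative reduction of $\|\mathbf{a}\|$.
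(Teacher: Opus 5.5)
Your proofs of \eqref{eq:euler} and \eqref{eq:pde} follow the paper's route. The Euler bound is the same local-error plus discrete-Gr\"onwall argument; your integral-form bound $\tfrac{h^2}{2}M$ is cleaner than the paper's ``$\tfrac{h^2}{2}\|\mathbf{a}\|+O(h^3)$''. Differentiating $F$ at $\alpha=1$ is the paper's expansion with $\alpha=1-\epsilon$, $\tau=t+t\epsilon+O(\epsilon^2)$.

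The step that fails is your justification of the last clause. You say the trajectory loss makes $\partial_\alpha v_\theta\approx v_\theta$, hence $t\,\partial_t v_\theta\approx 0$. But what the secondary and consistency losses pin down is $F'(1)=0$: the \emph{total} derivative of $\alpha\mapsto v_\theta(x,t/\alpha,\alpha)$ at $\alpha=1$ equals $v_\theta(x,t,1)$. That total derivative is $\partial_\alpha v_\theta-t\,\partial_t v_\theta$, so the statement \emph{is} \eqref{eq:pde}. Reading it as $\partial_\alpha v_\theta\approx v_\theta$ for the partial derivative conflates the two, and using it to get $t\,\partial_t v_\theta\approx0$ is circular.

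Concretely, the secondary loss is minimized by $v_{\theta^*}(x,\tau,\alpha)=\alpha\,u(x,\alpha\tau)$ with $u(x,t)=\mathbb{E}[x_1-x_0\mid x_t=x]$. Then $\partial_\alpha v_{\theta^*}(x,t,1)=u+t\,\partial_t u$, and \eqref{eq:pde} holds identically. Meanwhile $\partial_t v_{\theta^*}(x,t,1)=\partial_t u(x,t)$ is determined by $p_0,p_1$ alone and is generally nonzero.

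Indeed, by Proposition~\ref{prop:convergence} the consistency loss vanishes automatically at the joint optimum, whose primary field is exactly the vanilla-FM field. So no argument from optimality can show that $\|\mathbf{a}\|$ shrinks; any effect must come from finite capacity or optimization. What \eqref{eq:pde} does support is the paper's conditional bound $\|\partial_t v_\theta\|\le(\|\partial_\alpha v_\theta\|+\|v_\theta\|)/t$, given a bound on $\partial_\alpha v_\theta$. Keep your plan to state the clause qualitatively, but in that form, and drop the claim $t\,\partial_t v_\theta\approx 0$.
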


\vspace{-1em}
\begin{proof}
    See Appendix \ref{appendix:proof}.
\end{proof}

\vspace{-0.7em}
\paragraph{Why does RecFM work?}
\leavevmode\vspace{0.2em}\\ 
A given interpolated state $x_t$ lies on infinitely many trajectories indexed by $\alpha$. Vanilla FM exploits only one of them, providing a single regression target $\bm{v}^\ast$ per sample. RecFM uses every $(\tau, \alpha)$ pair as an independent supervisory signal for the \emph{same} underlying directional quantity $x_1 - x_0$ at the \emph{same} spatial point, while following the marginal distribution (Theorem \ref{thm:marginal}).
This functions as data augmentation in the conditioning space of the network and is particularly valuable in the one-step regime, where generation quality depends entirely on a single evaluation $v_\theta(x_0, 0, 1)$. By coupling predictions across scales, RecFM enriches the gradient signal at every training point and removes the warm-up phase typically required by shortcut or consistency-style training (Appendix \ref{app:shortcut-recfm}).

\vspace{-0.5em}
\section{Related Work}
\vspace{-0.5em}

\paragraph{Neural PDE Solvers and Physics-Informed Learning.}
Early advancements in scientific machine learning focused on directly embedding physical laws into neural architectures to solve boundary value problems with minimal data. Physics-Informed Neural Networks (PINNs) \cite{raissi2019physics} penalize PDE residuals at randomly sampled collocation points, while neural operators like Fourier Neural Operators \cite{li2020fourier} and DeepONet \cite{lu2021learning}, Equivariant Neural fields \cite{knigge2024spacetimecontinuouspdeforecasting} use functional mappings between infinite-dimensional spaces. To address the limitation of the availability of high-fidelity data, multi-fidelity PINNs \cite{penwarden2022multifidelity} were introduced to utilize low-fidelity responses as regularizers. However, these deterministic methods often struggle in complex settings and with real-world observations. By producing point estimates rather than predictive distributions, they offer limited uncertainty quantification and are rarely evaluated using probabilistic metrics, which can lead to physically inconsistent outputs.

\vspace{-0.8em}
\paragraph{Probabilistic Generative Modeling for Spatiotemporal Physics Systems.}
Probabilistic approaches quantify and calibrate uncertainty, providing a useful framework for learning physics-based systems.
DiffusionPDE \cite{huang2024diffusionpde} and FunDPS \cite{yao2025guided} unify the forward and backward problems through joint coefficient-solution state modeling, while VideoPDE \cite{li2025videopde} regards various tasks as video restoration to preserve fine-grained spectral details. A notable advancement is DYffusion \cite{ruhling2023dyffusion}, which replaces standard Gaussian perturbations with a dynamics-informed temporal interpolation. By avoiding the high memory overhead of video-based models like MCVD \cite{voleti2022mcvd}, DYffusion leverages Monte Carlo dropout to produce probabilistic ensembles during inference.
In physics and climate science, foundation models \cite{aich2026wind, ohana2024well, tauberschmidt2025physics} can achieve high accuracy with simple finetuning. Similarly, Rolling Sequence Diffusion Models \cite{ruhe2024rolling, wu2023ar} and ERDM \cite{cachay2025elucidated} utilize adaptive noise schedules to reflect the growth of uncertainty, prioritizing the ability to transition from deterministic to random horizons. 

\vspace{-0.8em}
\paragraph{Accelerated Inference and Consistency-Based Models.}
Diffusion’s iterative bottleneck has spurred recent studies on inference acceleration of generative models. EDM \cite{karras2022elucidating} provides efficient sampling that reduces sampling time for various tasks like molecular design \cite{vadgama2025probingequivariancesymmetrybreaking}. Rectified Flow \cite{liu2022flow} reduces transportation costs by training new ODEs on the previous flow generation pairs, optimizing the generation to a one-step path, while Shortcut Model \cite{frans2024one} stabilizes sampling through interval self-consistency. Recent innovations like MeanFlow \cite{geng2025mean} have introduced average velocity fields to characterize transitions, while Drifting Diffusion \cite{deng2026generative} performs few-step generation in feature space. Generalized flow maps \cite{davis2026generalisedflowmapsfewstep} show few-step generation on arbitrary Riemannian manifolds. Physics-informed methods like PBFM \cite{pbfm2026} further apply these ideas to physical dynamics by incorporating explicit PDE residuals into the objective. However, such methods are fundamentally constrained by their reliance on known physical formulas, making them unsuitable for complex systems where equations are unavailable or computationally prohibitive to implement. RecFM addresses these concerns by introducing a recursive framework in the data space to enforce flow trajectory across discretization scales. By adopting this approach without explicitly using PDE residual supervision, RecFM provides a robust solution for high-fidelity emulation in complex scientific domains.

\vspace{-0.3em}
\section{Experiments}\label{sec:exp}

\vspace{-0.3em}
\subsection{Datasets}
\vspace{-0.5em}
We evaluate our methods on three different dynamic physics datasets characterized by non-linear evolution and diverse spectral features. Specific technical configurations and simulation details are provided in Appendix \ref{appendix:dataset}.

\vspace{-0.8em}
\paragraph{Sea Surface Temperatures (SST).} This real-world climate dataset is adapted from the DYffusion benchmark \cite{ruhling2023dyffusion}, using daily global measurement data from the NOAA OISSTv2 \cite{huang2021improvements} product. Its spatial resolution is $1/4^\circ$. We utilized a regional $60\times 60$ latitude and longitude grid in the eastern tropical Pacific to simulate the long-term time-dependent relationship of the ocean temperature field.

\vspace{-0.8em}
\paragraph{Navier-Stokes Flow.} We follow the experimental setup of DYffusion \cite{otness2021extensible, ruhling2023dyffusion} to evaluate fluid dynamics rollouts. The environment consists of an incompressible channel flow past four randomly generated circular obstacles, inducing complex turbulence and vorticity patterns. The kinematic viscosity is set to $\nu = 10^{-3}$, and simulations are conducted on a $221 \times 42$ grid. The dataset comprises three channels: the velocity components in each spatial direction and the pressure field.

\vspace{-0.8em}
\paragraph{Helmholtz Staircase Equation.} We follow the setup of The Well \cite{ohana2024well}. This benchmark corresponds to a higher-order analytical solution for acoustic scattering from a point source near an infinite, periodic ``staircase'' boundary. The simulated fields are discretized into $1024 \times 256$ grids to capture both the real and imaginary components of the pressure field. Accordingly, the dataset consists of two channels representing the real and imaginary parts.

\vspace{-0.7em}
\subsection{Experiment Setup}
\vspace{-0.3em}

\begin{table}[!t]
    \centering
    \setlength{\tabcolsep}{4.75pt}
    \caption{Quantitative forecasting results for Sea Surface Temperature, Navier-Stokes Flow, and Helmholtz Staircase Equation. Lower values are better for MSE and CRPS, while the optimal SSR is 1. Best results in \textbf{bold}, second best \underline{underlined}, third best in \textcolor{gray}{gray}.}
    \label{tab:main_table}
    \small
    \begin{tabular}{l ccc c ccc ccc}
        \toprule
        \multirow{2}{*}{\textbf{Method}} & \multicolumn{4}{c}{\textbf{SST}} & \multicolumn{3}{c}{\textbf{Navier-Stokes}} & \multicolumn{3}{c}{\textbf{Helmholtz Staircase}}\\
        \cmidrule(lr){2-5} \cmidrule(lr){6-8} \cmidrule(lr){9-11}
        & \textbf{CRPS} & \textbf{MSE} & \textbf{SSR} & \textbf{Time [s]} & \textbf{CRPS} & \textbf{MSE} & \textbf{SSR} & \textbf{CRPS} & \textbf{MSE} & \textbf{SSR} \\
        \midrule
        Perturbation$^*$ & 0.281 & 0.180 & 0.411 & 0.4241 & 0.090 & 0.028 & 0.448 & 0.218 & 0.111 & 0.004 \\
        Dropout$^*$      & 0.267 & 0.164 & 0.406 & 0.4241 & 0.078 & 0.027 & 0.715 & 0.099 & 0.049 & 0.631 \\
        DDPM$^*$         & 0.246 & 0.177 & 0.674 & 0.3054 & 0.180 & 0.105 & 0.573 & 0.156 & 0.153 & 0.563 \\
        MCVD$^*$         & \underline{0.216} & \underline{0.161} & 0.926 & 79.167 & 0.154 & 0.070 & 0.524 & 0.137 & 0.128 & \textcolor{gray}{0.867} \\
        DYffusion$^*$    & 0.224 & 0.173 & \textcolor{gray}{1.033} & 4.6722 & 0.067 & 0.022 & 0.877 & 0.144 & 0.106 & \underline{1.121} \\
        VideoPDE \cite{li2025videopde} & \underline{0.216}  & 0.162 & 0.746  & 19.753 & \textcolor{gray}{0.033} & \textcolor{gray}{0.0068} & 0.205 & \textcolor{gray}{0.026} & \textcolor{gray}{5.6e-4} & 4.334 \\
        Vanilla FM & 0.260 & 0.232 & 0.914 & 1.5202 & 0.036 & 0.0076 & \textcolor{gray}{0.911} & 0.030 & 6.5e-4 & 1.485 \\
        \midrule
        RecFM (1-step) & \textcolor{gray}{0.217} &  \textcolor{gray}{0.162} &  \underline{0.984} & 0.4310 & \textbf{0.031} & \textbf{0.0064} & \textbf{0.959} & \underline{0.0034} & \underline{4.2e-5} & \textbf{1.090} \\
        RecFM (2-step) & \textbf{0.216} & \textbf{0.161} & \textbf{1.004} & 0.7353 &  \underline{0.032} &  \underline{0.0068} &  \underline{0.932} & \textbf{0.0027} & \textbf{2.7e-5} & 1.440 \\
        \bottomrule
        \multicolumn{11}{l}{\footnotesize $^*$Results for SST and Navier-Stokes are reproduced from DYffusion \cite{ruhling2023dyffusion}.}
    \end{tabular}
\end{table}

\subsubsection{Forecasting Configuration} 
\vspace{-0.3em}

We evaluate performance across varying temporal horizons:
\vspace{-0.7em}
\paragraph{Temporal Horizons and Autoregressive Rollout.} For SST, we predict $7$ days ahead from a $1$-day input. For Navier-Stokes and Helmholtz, we respectively perform complete trajectory reconstructions of 64 and 49 steps starting from the initial state. To manage these long-range sequences, models are applied autoregressively: Navier-Stokes models predict $16$ frames each time, while Helmholtz models generate $7$ frames, unless specified.
\vspace{-0.7em}
\paragraph{Ensemble Generation.} For all probabilistic metrics (CRPS, SSR), we create $M=50$ ensemble members per initial condition to ensure statistical reliability.
\vspace{-0.7em}
\paragraph{Model Architecture and Efficiency.} We apply RecFM to a pixel-level temporal DiT backbone, following the design introduced in \cite{li2025videopde}. All inference time measurements are performed on a single NVIDIA L40S GPU. RecFM is evaluated in both single- and multi-step regimes. More details of model architecture and implementation are included in Appendix \ref{appendix:architecture}.
\vspace{-0.7em}
\paragraph{Hyperparameter Selection.}
We use $\lambda=1$ for the consistency loss weight, with further analysis provided in Section~\ref{sec:ablation_lambda}. We adopt the depth-$2$ formulation for RecFM, corresponding to a primary trajectory with $\alpha^{(1)} = 1$ and a secondary trajectory with scale $\alpha^{(2)} = \alpha$, as it provides the best performance and efficiency (see Appendix~\ref{appendix:depth}).

\vspace{-0.3em}
\subsubsection{Baselines} 
\vspace{-0.3em}

We compare RecFM against a comprehensive suite of generative and stochastic benchmarks. For standard forecasting models, we adopt the experimental configuration and model suite from DYffusion \cite{ruhling2023dyffusion}, which includes:
\begin{itemize}
    \vspace{-0.4em}
    \item \textbf{Stochastic Methods:} \textbf{Perturbation} and \textbf{Dropout} (Monte Carlo dropout at inference).
    \item \textbf{Iterative Models:} \textbf{DDPM} and \textbf{MCVD}, which utilize Gaussian noising processes.
    \item \textbf{Dynamics-Informed Solvers:} \textbf{DYffusion}, which directly couples diffusion steps with physical timesteps.
    \vspace{-0.4em}
\end{itemize}
We further include benchmarks with state-of-the-art generative backbones:
\begin{itemize}
    \vspace{-0.4em}
    \item \textbf{VideoPDE \cite{li2025videopde}:} A unified solver that recasts PDE solving as hierarchical video inpainting using a pixel-space hierarchical transformer. 
    \item \textbf{Vanilla FM:} Utilizes the identical architectural backbone to RecFM but is trained using a standard Flow Matching objective without the recursive feature.
    \vspace{-0.4em}
\end{itemize}
We exclude PBFM \cite{pbfm2026} from our primary comparisons as it requires explicit closed-form PDE residuals, which are impractical for complex, data-rich systems like global SST measurements. Comparison of PDE-governed data with physics-informed metrics is included in Appendix \ref{appendix:pbfm}. One-step methods such as MeanFlow \cite{geng2025mean} and Shortcut Models \cite{frans2024one} and other benchmarks like Rectified Flow \cite{liu2022flow} are primarily designed for static generation and are therefore not included in our main comparisons. A comparison with Shortcut Models on Helmholtz Staircase is included in Appendix~\ref{app:shortcut-recfm}.

\subsubsection{Evaluation Metrics}
\vspace{-0.3em}

To evaluate the fidelity and calibration of probability prediction, we employ three standard metrics:

\vspace{-0.5em}
\paragraph{Continuous Ranked Probability Score (CRPS) \cite{matheson1976scoring}.} A strictly proper scoring rule to measure the accuracy of the cumulative distribution function $F$ relative to the observation $y$: \vspace{-0.2em}
    \begin{equation}
        \text{CRPS}(F, y) = \int_{-\infty}^{\infty} (F(z) - \mathbf{1}[z \geq y])^2 dz
    \end{equation}
    In practice, we use the unbiased ``fair'' estimator for $M$ ensembles.

\vspace{-0.5em}
\paragraph{Mean Squared Error (MSE).} Measures the deterministic accuracy of the ensemble mean prediction $\bar{x}$ against the ground truth $y$ for the dataset of size $S$: 
    \begin{equation}
        \text{MSE} = \frac{1}{S} \sum_{j=1}^S \|\bar{x}_j - y_j\|^2
    \end{equation}
\paragraph{Spread-Skill Ratio (SSR).} 
\vspace{-0.5em}Evaluates the reliability of the ensemble by comparing the ensemble spread to the RMSE of the ensemble mean. An ideal ratio of $1.0$ indicates a perfectly calibrated ensemble. Specifically, SSR values smaller than 1.0 indicate underdispersion, while values larger than 1.0 indicate overdispersion.

\vspace{-0.5em}

\subsection{Forecasting Results}
\vspace{-0.3em}

\begin{figure}
    \centering
    \includegraphics[width=0.999\linewidth]{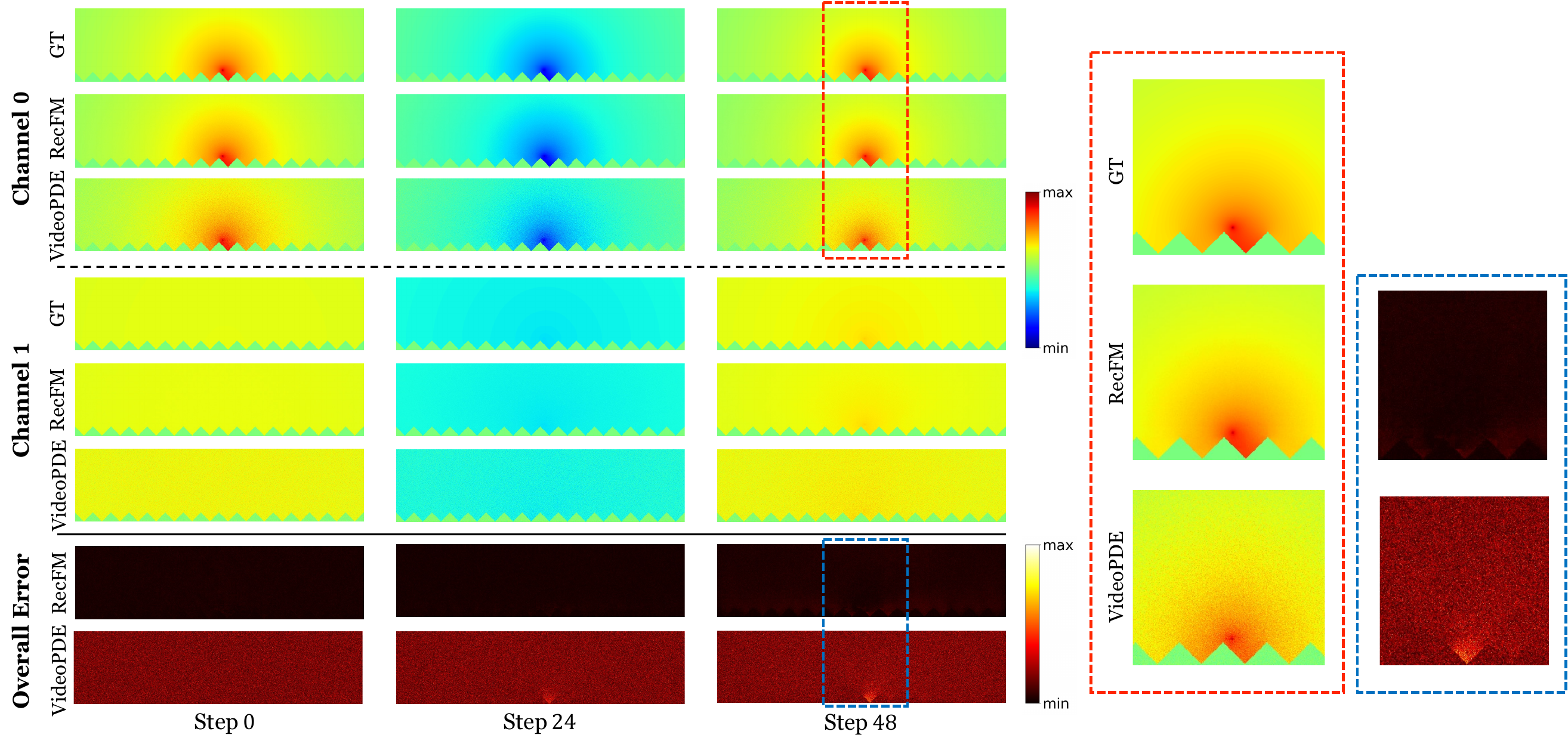}
    \caption{\textbf{Roll-out results of the Helmholtz Staircase equation.} Visual comparison of Ground Truth against RecFM and VideoPDE (best-performed baseline) for two channels, with the bottom rows indicating absolute errors. Columns correspond to dataset timesteps. The variation observed at Step 48 is displayed in an enlarged view on the right.}
    \label{fig:helmholtz_zoom}
\end{figure}

Quantitative results across all benchmarks are summarized in Table~\ref{tab:main_table}, with standard deviations in Appendix~\ref{appendix:stdev}. We evaluate RecFM using one- and two-step inference to highlight its flexibility, with additional analysis of step size in Appendix~\ref{appendix:mse_vs_step}. 
Overall, RecFM consistently achieves state-of-the-art performance in both fidelity and efficiency. In particular, it attains up to a $20\times$ speedup over the diffusion-based baseline VideoPDE while also improving predictive accuracy and calibration. 
This speedup is measured in terms of total rollout runtime, reflecting the reduced number of inference steps required by RecFM.
On the Helmholtz Staircase equation, RecFM achieves a $10\times$ reduction in error compared to VideoPDE, which is the best-performed baseline. 
We further visualize roll-out snapshots for both channels of the Helmholtz equation, along with corresponding error maps, in Figure~\ref{fig:helmholtz_zoom}. RecFM produces predictions that closely match the ground truth, while VideoPDE struggles to capture the circular wave propagation patterns. Additional visualizations are provided in Appendix~\ref{appendix:vis}.

Moreover, compared to vanilla flow-matching methods, which typically require $\sim$5 inference steps, RecFM produces high-quality results with only 1-2 steps, achieving over $15\%$ lower MSE and substantially better SSR scores. We also observe that multi-step RecFM models do not consistently outperform single-step variants, as errors can accumulate over successive iterations.

While RecFM performs best across all tasks, its advantage is more pronounced on deterministic problems, such as PDE prediction tasks governed by explicit physical constraints, than on more stochastic data such as SST. This behavior is expected, as few-step flow matching is inherently more deterministic, which introduces minimal randomness into the sampling process.

\vspace{-0.8em}
\paragraph{Training Stability.}
\begin{wrapfigure}{r}{0.5\linewidth}
    \centering
    \includegraphics[width=\linewidth]{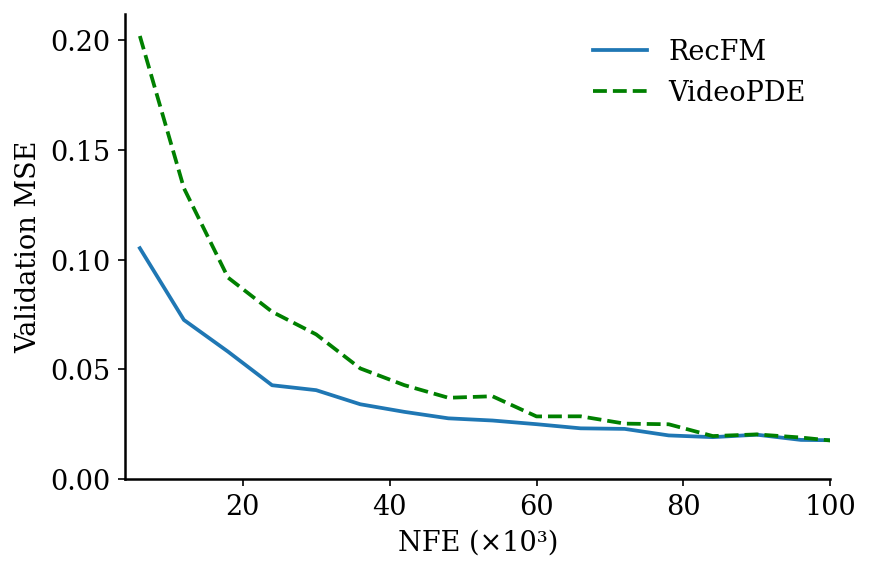}
    \caption{\textbf{Validation MSE versus NFE during training.} RecFM converges faster than the diffusion-based model VideoPDE and maintains consistently lower validation error.}
    \label{fig:mse_vs_nfe_videopde}
    \vspace{-2em}
\end{wrapfigure}

We measure training progress of Navier-Stokes Flow in terms of the number of function evaluations (NFE), defined as the total number of vector field evaluations (\textit{i.e.}, forward passes) during optimization. As shown in Figure~\ref{fig:mse_vs_nfe_videopde}, RecFM converges faster than the diffusion-based baseline (VideoPDE) and consistently achieves lower validation error throughout training.

\vspace{-0.5em}
\subsection{Ablation Studies}\label{sec:ablation_lambda}
\vspace{-0.3em}

We investigate the sensitivity of RecFM (see Equation \ref{eq:training_objective}) to the consistency loss weight $\lambda$ on the Navier-Stokes equation. Table \ref{tab:ablation_lambda} reports results for both single-step and multi-step models over a wide range of $\lambda$ values. A moderate setting (\textit{e.g.}, $\lambda = 1.0$) consistently yields the best performance, suggesting that an appropriate balance between objectives is important. When $\lambda$ is too small, the model places insufficient emphasis on trajectory consistency, which degrades performance. Interestingly, even without the self-consistency term ($\lambda = 0$), RecFM still surpasses the Vanilla FM baseline in Table \ref{tab:main_table}, likely due to the presence of the secondary trajectory loss. In contrast, very large values (\textit{e.g.}, $\lambda = 10^6$) cause the consistency term to overwhelm the primary flow-matching objective, leading to a marked drop in accuracy.

\vspace{-1em}

\begin{table}[!ht]
    \centering
    \caption{Ablation study on the effect of $\lambda$ for the Navier-Stokes equation using the 1-step model and 5-step model. Here $\lambda=0$ results in vanilla FM. Best results in \textbf{bold}.}
    \label{tab:ablation_lambda}
    \small
    \begin{tabular}{lcccccc}
        \toprule
        & \multicolumn{3}{c}{\textbf{1-step model}} & \multicolumn{3}{c}{\textbf{5-step model}} \\
        \cmidrule(lr){2-4} \cmidrule(lr){5-7}
        $\lambda$ & \textbf{CRPS} ($\downarrow$) & \textbf{MSE} ($\downarrow$) & \textbf{SSR} ($\rightarrow 1$) & \textbf{CRPS} ($\downarrow$) & \textbf{MSE} ($\downarrow$) & \textbf{SSR} ($\rightarrow 1$) \\
        \midrule
        0.0   & 0.035 & 0.0074 & 0.957 & 0.039 & 0.0093 & 0.843 \\
        0.5   & 0.034 & 0.0071 & 1.024 & 0.040 & 0.0099 & \textbf{0.855}  \\
        1.0   & \textbf{0.031} & \textbf{0.0064} & 0.959 & \textbf{0.037} & \textbf{0.0083} & 0.836 \\
        10.0  & 0.038 & 0.0089 & \textbf{0.988} & 0.039 & 0.0089 & 0.784 \\
        1000000 & 0.238 & 0.268 & 1.147 & 0.234 & 0.261 & 1.161 \\
        \bottomrule
    \end{tabular}
\end{table}
\vspace{-0.5em}

\subsection{Image Generation Experiments}
\vspace{-0.3em}
Although our primary focus is scientific emulation, RecFM also generalizes beyond physics-based systems, achieving competitive image quality with reduced training and inference cost (Appendix \ref{appendix:img}).
\vspace{-0.5em}

\section{Conclusion and Discussion}\label{sec:conclusion}
\vspace{-0.5em}

We introduced Recursive Flow Matching (RecFM), a framework that enforces consistency of generative trajectories across different sampling regimes. Our findings indicate that aligning trajectories across scales leads to an unexpected effect: using fewer inference steps can actually enhance both stability and accuracy, particularly in physics-based applications. This observation questions the usual trade-off between sampling efficiency and fidelity and suggests that the structure of the trajectory, rather than simply increasing the number of steps, is central to effective generation.

In experiments, RecFM delivers strong results on a range of scientific benchmarks, matching the performance of multi-step solvers while operating in the one- or few-step regimes and yielding notable speedups. These results highlight the potential of consistency-based approaches for real-time scientific emulation.

\vspace{-0.5em}
\paragraph{Limitations and Future Work.}
Despite these gains, extending RecFM to high-complexity real-world video remains challenging. Unlike physics-driven systems, natural videos involve rich semantic and temporal variations that may require modeling beyond standard flow matching trajectories. Although preliminary image-generation results are promising, scaling the framework to realistic video domains remains an open problem. Future work will further explore these settings and investigate RecFM as a general-purpose foundation model for multi-physics and real-world dynamical systems.

\newpage

\bibliographystyle{unsrt}
\bibliography{references}

@inproceedings{nichol2021improveddenoisingdiffusionprobabilistic,
  title={Improved denoising diffusion probabilistic models},
  author={Nichol, Alexander Quinn and Dhariwal, Prafulla},
  booktitle={International conference on machine learning},
  pages={8162--8171},
  year={2021},
  organization={PMLR}
}

@article{chen2024flowmatchinggeneralgeometries,
  title={Flow matching on general geometries},
  author={Chen, Ricky TQ and Lipman, Yaron},
  journal={arXiv preprint arXiv:2302.03660},
  year={2023}
}

@article{song2020denoising,
  title={Denoising diffusion implicit models},
  author={Song, Jiaming and Meng, Chenlin and Ermon, Stefano},
  journal={arXiv preprint arXiv:2010.02502},
  year={2020}
}

@article{ho2020denoising,
  title={Denoising diffusion probabilistic models},
  author={Ho, Jonathan and Jain, Ajay and Abbeel, Pieter},
  journal={Advances in neural information processing systems},
  volume={33},
  pages={6840--6851},
  year={2020}
}

@article{karras2022elucidating,
  title={Elucidating the design space of diffusion-based generative models},
  author={Karras, Tero and Aittala, Miika and Aila, Timo and Laine, Samuli},
  journal={Advances in neural information processing systems},
  volume={35},
  pages={26565--26577},
  year={2022}
}

@article{xu2023inversion,
  title={Inversion-free image editing with natural language},
  author={Xu, Sihan and Huang, Yidong and Pan, Jiayi and Ma, Ziqiao and Chai, Joyce},
  journal={arXiv preprint arXiv:2312.04965},
  year={2023}
}

@article{vadgama2025probingequivariancesymmetrybreaking,
  title={Probing Equivariance and Symmetry Breaking in Convolutional Networks},
  author={Vadgama, Sharvaree and Islam, Mohammad Mohaiminul and Buracas, Domas and Shewmake, Christian and Moskalev, Artem and Bekkers, Erik},
  journal={arXiv preprint arXiv:2501.01999},
  year={2025}
}

@article{davis2026generalisedflowmapsfewstep,
  title={Generalised Flow Maps for Few-Step Generative Modelling on Riemannian Manifolds},
  author={Davis, Oscar and Albergo, Michael S and Boffi, Nicholas M and Bronstein, Michael M and Bose, Avishek Joey},
  journal={arXiv preprint arXiv:2510.21608},
  year={2025}
}

@article{knigge2024spacetimecontinuouspdeforecasting,
  title={Space-time continuous pde forecasting using equivariant neural fields},
  author={Knigge, David M and Wessels, David R and Valperga, Riccardo and Papa, Samuele and Sonke, Jan-Jakob and Gavves, Efstratios and Bekkers, Erik J},
  journal={Advances in Neural Information Processing Systems},
  volume={37},
  pages={76553--76577},
  year={2024}
}

@article{benton2024error,
  title={Error Bounds for Flow Matching Methods},
  author={Benton, Joe and Deligiannidis, George and Doucet, Arnaud},
  journal={Transactions on Machine Learning Research},
  year={2024}
}

@article{zhang2025trajectoryflowmatchingapplications,
  title={Trajectory flow matching with applications to clinical time series modelling},
  author={Zhang, Xi and Pu, Yuan and Kawamura, Yuki and Loza, Andrew and Bengio, Yoshua and Shung, Dennis L and Tong, Alexander},
  journal={Advances in Neural Information Processing Systems},
  volume={37},
  pages={107198--107224},
  year={2024}
}

@article{islam2025longitudinalflowmatchingtrajectory,
  title={Longitudinal Flow Matching for Trajectory Modeling},
  author={Islam, Mohammad Mohaiminul and Kuipers, Thijs P and Vadgama, Sharvaree and de Vente, Coen and Khan, Afsana and S{\'a}nchez, Clara I and Bekkers, Erik J},
  journal={arXiv preprint arXiv:2510.03569},
  year={2025}
}

@inproceedings{shen2025simultaneous,
  title={Simultaneous modeling of protein conformation and dynamics via autoregression},
  author={Shen, Yuning and Wang, Lihao and Yuan, Huizhuo and Wang, Yan and Yang, Bangji and Gu, Quanquan},
  booktitle={The Thirty-ninth Annual Conference on Neural Information Processing Systems},
  year={2025}
}

@article{abramson2024accurate,
  title={Accurate structure prediction of biomolecular interactions with AlphaFold 3},
  author={Abramson, Josh and Adler, Jonas and Dunger, Jack and Evans, Richard and Green, Tim and Pritzel, Alexander and Ronneberger, Olaf and Willmore, Lindsay and Ballard, Andrew J and Bambrick, Joshua and others},
  journal={Nature},
  volume={630},
  number={8016},
  pages={493--500},
  year={2024},
  publisher={Nature Publishing Group UK London}
}

@article{watt2025ace2,
  title={ACE2: accurately learning subseasonal to decadal atmospheric variability and forced responses},
  author={Watt-Meyer, Oliver and Henn, Brian and McGibbon, Jeremy and Clark, Spencer K and Kwa, Anna and Perkins, W Andre and Wu, Elynn and Harris, Lucas and Bretherton, Christopher S},
  journal={npj Climate and Atmospheric Science},
  volume={8},
  number={1},
  pages={205},
  year={2025},
  publisher={Nature Publishing Group UK London}
}

@article{duncan2025samudrace,
  title={SamudrACE: Fast and accurate coupled climate modeling with 3D ocean and atmosphere emulators},
  author={Duncan, James PC and Wu, Elynn and Dheeshjith, Surya and Subel, Adam and Arcomano, Troy and Clark, Spencer K and Henn, Brian and Kwa, Anna and McGibbon, Jeremy and Perkins, W Andre and others},
  journal={arXiv preprint arXiv:2509.12490},
  year={2025}
}

@article{ruhling2023dyffusion,
  title={Dyffusion: A dynamics-informed diffusion model for spatiotemporal forecasting},
  author={R{\"u}hling Cachay, Salva and Zhao, Bo and Joren, Hailey and Yu, Rose},
  journal={Advances in neural information processing systems},
  volume={36},
  pages={45259--45287},
  year={2023}
}

@article{cachay2025elucidated,
  title={Elucidated Rolling Diffusion Models for Probabilistic Forecasting of Complex Dynamics},
  author={Cachay, Salva R{\"u}hling and Aittala, Miika and Kreis, Karsten and Brenowitz, Noah and Vahdat, Arash and Mardani, Morteza and Yu, Rose},
  journal={arXiv preprint arXiv:2506.20024},
  year={2025}
}

@article{huang2024diffusionpde,
  title={DiffusionPDE: Generative PDE-solving under partial observation},
  author={Huang, Jiahe and Yang, Guandao and Wang, Zichen and Park, Jeong Joon},
  journal={Advances in Neural Information Processing Systems},
  volume={37},
  pages={130291--130323},
  year={2024}
}

@article{zhuang2025spatially,
  title={Spatially-aware diffusion models with cross-attention for global field reconstruction with sparse observations},
  author={Zhuang, Yilin and Cheng, Sibo and Duraisamy, Karthik},
  journal={Computer Methods in Applied Mechanics and Engineering},
  volume={435},
  pages={117623},
  year={2025},
  publisher={Elsevier}
}

@article{frans2024one,
  title={One step diffusion via shortcut models},
  author={Frans, Kevin and Hafner, Danijar and Levine, Sergey and Abbeel, Pieter},
  journal={arXiv preprint arXiv:2410.12557},
  year={2024}
}

@article{mathieu2020riemannian,
  title={Riemannian continuous normalizing flows},
  author={Mathieu, Emile and Nickel, Maximilian},
  journal={Advances in neural information processing systems},
  volume={33},
  pages={2503--2515},
  year={2020}
}

@article{lipman2022flow,
  title={Flow matching for generative modeling},
  author={Lipman, Yaron and Chen, Ricky TQ and Ben-Hamu, Heli and Nickel, Maximilian and Le, Matt},
  journal={arXiv preprint arXiv:2210.02747},
  year={2022}
}

@article{geng2025mean,
  title={Mean flows for one-step generative modeling},
  author={Geng, Zhengyang and Deng, Mingyang and Bai, Xingjian and Kolter, J Zico and He, Kaiming},
  journal={arXiv preprint arXiv:2505.13447},
  year={2025}
}

@article{li2025videopde,
  title={VideoPDE: Unified generative pde solving via video inpainting diffusion models},
  author={Li, Edward and Wang, Zichen and Huang, Jiahe and Park, Jeong Joon},
  journal={arXiv preprint arXiv:2506.13754},
  year={2025}
}

@article{raissi2019physics,
  title={Physics-informed neural networks: A deep learning framework for solving forward and inverse problems involving nonlinear partial differential equations},
  author={Raissi, Maziar and Perdikaris, Paris and Karniadakis, George E},
  journal={Journal of Computational physics},
  volume={378},
  pages={686--707},
  year={2019},
  publisher={Elsevier}
}

@article{li2020fourier,
  title={Fourier neural operator for parametric partial differential equations},
  author={Li, Zongyi and Kovachki, Nikola and Azizzadenesheli, Kamyar and Liu, Burigede and Bhattacharya, Kaushik and Stuart, Andrew and Anandkumar, Anima},
  journal={arXiv preprint arXiv:2010.08895},
  year={2020}
}

@article{lu2021learning,
  title={Learning nonlinear operators via DeepONet based on the universal approximation theorem of operators},
  author={Lu, Lu and Jin, Pengzhan and Pang, Guofei and Zhang, Zhongqiang and Karniadakis, George Em},
  journal={Nature machine intelligence},
  volume={3},
  number={3},
  pages={218--229},
  year={2021},
  publisher={Nature Publishing Group UK London}
}

@article{penwarden2022multifidelity,
  title={Multifidelity modeling for physics-informed neural networks (PINNs)},
  author={Penwarden, Michael and Zhe, Shandian and Narayan, Akil and Kirby, Robert M},
  journal={Journal of Computational Physics},
  volume={451},
  pages={110844},
  year={2022},
  publisher={Elsevier}
}

@article{yao2025guided,
  title={Guided diffusion sampling on function spaces with applications to pdes},
  author={Yao, Jiachen and Mammadov, Abbas and Berner, Julius and Kerrigan, Gavin and Ye, Jong Chul and Azizzadenesheli, Kamyar and Anandkumar, Anima},
  journal={arXiv preprint arXiv:2505.17004},
  year={2025}
}

@article{aich2026wind,
  title={WIND: Weather Inverse Diffusion for Zero-Shot Atmospheric Modeling},
  author={Aich, Michael and F{\"u}rst, Andreas and Sestak, Florian and Ruiz-Gonzalez, Carlos and Boers, Niklas and Brandstetter, Johannes},
  journal={arXiv preprint arXiv:2602.03924},
  year={2026}
}

@article{ohana2024well,
  title={The well: a large-scale collection of diverse physics simulations for machine learning},
  author={Ohana, Ruben and McCabe, Michael and Meyer, Lucas and Morel, Rudy and Agocs, Fruzsina J and Beneitez, Miguel and Berger, Marsha and Burkhart, Blakesley and Dalziel, Stuart B and Fielding, Drummond B and others},
  journal={Advances in Neural Information Processing Systems},
  volume={37},
  pages={44989--45037},
  year={2024}
}

@article{ruhe2024rolling,
  title={Rolling diffusion models},
  author={Ruhe, David and Heek, Jonathan and Salimans, Tim and Hoogeboom, Emiel},
  journal={arXiv preprint arXiv:2402.09470},
  year={2024}
}

@article{wu2023ar,
  title={Ar-diffusion: Auto-regressive diffusion model for text generation},
  author={Wu, Tong and Fan, Zhihao and Liu, Xiao and Zheng, Hai-Tao and Gong, Yeyun and Jiao, Jian and Li, Juntao and Guo, Jian and Duan, Nan and Chen, Weizhu and others},
  journal={Advances in Neural Information Processing Systems},
  volume={36},
  pages={39957--39974},
  year={2023}
}

@article{voleti2022mcvd,
  title={Mcvd-masked conditional video diffusion for prediction, generation, and interpolation},
  author={Voleti, Vikram and Jolicoeur-Martineau, Alexia and Pal, Chris},
  journal={Advances in neural information processing systems},
  volume={35},
  pages={23371--23385},
  year={2022}
}

@article{liu2022flow,
  title={Flow straight and fast: Learning to generate and transfer data with rectified flow},
  author={Liu, Xingchao and Gong, Chengyue and Liu, Qiang},
  journal={arXiv preprint arXiv:2209.03003},
  year={2022}
}

@article{deng2026generative,
  title={Generative Modeling via Drifting},
  author={Deng, Mingyang and Li, He and Li, Tianhong and Du, Yilun and He, Kaiming},
  journal={arXiv preprint arXiv:2602.04770},
  year={2026}
}

@inproceedings{pbfm2026,
  title={Physics vs distributions: Pareto optimal flow matching with physics constraints},
  author={Baldan, Giacomo and Liu, Qiang and Guardone, Alberto and Thuerey, Nils},
  booktitle={The Fourteenth International Conference on Learning Representations},
  year={2026}
}

@article{huang2021improvements,
  title={Improvements of the daily optimum interpolation sea surface temperature (DOISST) version 2.1},
  author={Huang, Boyin and Liu, Chunying and Banzon, Viva and Freeman, Eric and Graham, Garrett and Hankins, Bill and Smith, Tom and Zhang, Huai-Min},
  journal={Journal of Climate},
  volume={34},
  number={8},
  pages={2923--2939},
  year={2021},
  publisher={American Meteorological Society}
}

@article{otness2021extensible,
  title={An extensible benchmark suite for learning to simulate physical systems},
  author={Otness, Karl and Gjoka, Arvi and Bruna, Joan and Panozzo, Daniele and Peherstorfer, Benjamin and Schneider, Teseo and Zorin, Denis},
  journal={arXiv preprint arXiv:2108.07799},
  year={2021}
}

@article{matheson1976scoring,
  title={Scoring rules for continuous probability distributions},
  author={Matheson, James E and Winkler, Robert L},
  journal={Management science},
  volume={22},
  number={10},
  pages={1087--1096},
  year={1976},
  publisher={INFORMS}
}

@inproceedings{ma2024sit,
  title={Sit: Exploring flow and diffusion-based generative models with scalable interpolant transformers},
  author={Ma, Nanye and Goldstein, Mark and Albergo, Michael S and Boffi, Nicholas M and Vanden-Eijnden, Eric and Xie, Saining},
  booktitle={European Conference on Computer Vision},
  pages={23--40},
  year={2024},
  organization={Springer}
}

@inproceedings{peebles2023scalable,
  title={Scalable diffusion models with transformers},
  author={Peebles, William and Xie, Saining},
  booktitle={Proceedings of the IEEE/CVF international conference on computer vision},
  pages={4195--4205},
  year={2023}
}

@article{dhariwal2021diffusion,
  title={Diffusion models beat gans on image synthesis},
  author={Dhariwal, Prafulla and Nichol, Alexander},
  journal={Advances in neural information processing systems},
  volume={34},
  pages={8780--8794},
  year={2021}
}

@inproceedings{rombach2022high,
  title={High-resolution image synthesis with latent diffusion models},
  author={Rombach, Robin and Blattmann, Andreas and Lorenz, Dominik and Esser, Patrick and Ommer, Bj{\"o}rn},
  booktitle={Proceedings of the IEEE/CVF conference on computer vision and pattern recognition},
  pages={10684--10695},
  year={2022}
}

@article{zeni2025generative,
  title={A generative model for inorganic materials design},
  author={Zeni, Claudio and Pinsler, Robert and Z{\"u}gner, Daniel and Fowler, Andrew and Horton, Matthew and Fu, Xiang and Wang, Zilong and Shysheya, Aliaksandra and Crabb{\'e}, Jonathan and Ueda, Shoko and others},
  journal={Nature},
  volume={639},
  number={8055},
  pages={624--632},
  year={2025},
  publisher={Nature Publishing Group UK London}
}

@inproceedings{deng2009imagenet,
  title={Imagenet: A large-scale hierarchical image database},
  author={Deng, Jia and Dong, Wei and Socher, Richard and Li, Li-Jia and Li, Kai and Fei-Fei, Li},
  booktitle={2009 IEEE conference on computer vision and pattern recognition},
  pages={248--255},
  year={2009},
  organization={Ieee}
}

@article{tauberschmidt2025physics,
  title={Physics-Constrained Fine-Tuning of Flow-Matching Models for Generation and Inverse Problems},
  author={Tauberschmidt, Jan and Fellenz, Sophie and Vollmer, Sebastian J and Duncan, Andrew B},
  journal={arXiv preprint arXiv:2508.09156},
  year={2025}
}

@book{dhatt2012finite,
  title={Finite element method},
  author={Dhatt, Gouri and Lefran{\c{c}}ois, Emmanuel and Touzot, Gilbert},
  year={2012},
  publisher={John Wiley \& Sons}
}

@article{xu2025understanding,
  title={On understanding and overcoming spectral biases of deep neural network learning methods for solving PDEs},
  author={Xu, Zhi-Qin John and Zhang, Lulu and Cai, Wei},
  journal={Journal of Computational Physics},
  volume={530},
  pages={113905},
  year={2025},
  publisher={Elsevier}
}

@article{cantwell2015nektar++,
  title={Nektar++: An open-source spectral/hp element framework},
  author={Cantwell, Chris D and Moxey, David and Comerford, Andrew and Bolis, Alessandro and Rocco, Gabriele and Mengaldo, Gianmarco and De Grazia, Daniele and Yakovlev, Sergey and Lombard, J-E and Ekelschot, Dirk and others},
  journal={Computer physics communications},
  volume={192},
  pages={205--219},
  year={2015},
  publisher={Elsevier}
}

@article{kovachki2023neural,
  title={Neural operator: Learning maps between function spaces with applications to pdes},
  author={Kovachki, Nikola and Li, Zongyi and Liu, Burigede and Azizzadenesheli, Kamyar and Bhattacharya, Kaushik and Stuart, Andrew and Anandkumar, Anima},
  journal={Journal of Machine Learning Research},
  volume={24},
  number={89},
  pages={1--97},
  year={2023}
}

@article{xu2023cyclenet,
  title={Cyclenet: Rethinking cycle consistency in text-guided diffusion for image manipulation},
  author={Xu, Sihan and Ma, Ziqiao and Huang, Yidong and Lee, Honglak and Chai, Joyce},
  journal={Advances in Neural Information Processing Systems},
  volume={36},
  pages={10359--10384},
  year={2023}
}

@article{song2024multi,
  title={Multi-student diffusion distillation for better one-step generators},
  author={Song, Yanke and Lorraine, Jonathan and Nie, Weili and Kreis, Karsten and Lucas, James},
  journal={arXiv preprint arXiv:2410.23274},
  year={2024}
}


\newpage

\appendix

\section{Dataset Details}\label{appendix:dataset}

In this section, we provide the formal governing equations and technical implementation details for the physics datasets used in our evaluation. For every dataset that includes boundary conditions, these conditions are provided as extra constraints to each model.

\subsection{Sea Surface Temperatures (SST)}

The SST dataset is a real-world climate benchmark representing the daily evolution of sea surface temperature fields $\mathbf{T}$ over the eastern tropical Pacific Ocean. While not governed by a single closed-form PDE, the dynamics arise from complex ocean-atmosphere interactions and large-scale climate variability. Each sample consists of 11 spatial boxes, each represented as a $60 \times 60$ latitude-longitude grid with a single scalar channel corresponding to temperature values \cite{ruhling2023dyffusion}. Following prior work, we use data from 1982 to 2019 for training, 2020 for validation, and 2021 for testing.

\subsection{Navier-Stokes (NS) Flow}

The Navier-Stokes benchmark simulates incompressible channel flow past random circular obstacles. The dynamics are governed by the following momentum and continuity equations:
\begin{equation}
    \begin{aligned}
        \frac{\partial \mathbf{u}}{\partial t} + (\mathbf{u} \cdot \nabla) \mathbf{u} &= -\frac{1}{\rho} \nabla p + \nu \nabla^2 \mathbf{u} + \mathbf{f}\\
        \nabla \cdot \mathbf{u} &= 0
    \end{aligned}
\end{equation}
where $\mathbf{u} = (u, v)$ is the velocity vector, $p$ is the pressure field, $\rho$ is the fluid density, and $\nu = 10^{-3}$ is the kinematic viscosity.

\paragraph{Technical Configuration:}
\begin{itemize}
    \item \textbf{Channels:} The dataset consists of 3 distinct channels: the $x$-velocity component ($u$), the $y$-velocity component ($v$), and the pressure field ($p$).
    \item \textbf{Preprocessing:} The raw simulation data is defined on a $221 \times 42$ grid. For the implementation of RecFM, Vanilla FM, and VideoPDE \cite{li2025videopde}, the input fields are bilinearly interpolated to a resolution of $220 \times 40$ before being passed to the model. During evaluation, the generated outputs are upsampled back to the original $221 \times 42$ resolution to compute metrics.
\end{itemize}

\subsection{Helmholtz Staircase Equation}

The Helmholtz benchmark evaluates acoustic scattering from a point source near a corrugated boundary. The steady-state pressure field $u$ satisfies:
\begin{equation}
    -(\Delta + \omega^2)u = \delta_{\mathbf{x}_0}
\end{equation}
where $\Delta$ is the Laplacian, $\omega$ is the angular frequency, and $\mathbf{x}_0$ is the source position. The time-dependent pressure evolution is defined analytically as:
\begin{equation}
    U(t, \mathbf{x}) = u(\mathbf{x}) e^{-i \omega t}
\end{equation}
Although the time dependence is analytically separable and purely periodic, such that the full trajectory is determined by the spatial field at a single time, the dataset remains physically meaningful as it encodes coherent wave propagation and phase dynamics. This setting therefore serves as a controlled benchmark for evaluating physical consistency (see Appendix~\ref{appendix:pbfm}), complementing more dynamically complex systems.

\paragraph{Technical Configuration:}
\begin{itemize}
    \item \textbf{Channels:} To represent the complex-valued pressure fields, the model processes $2$ primary channels: the real component $Re(U)$ and the imaginary component $Im(U)$ of the acoustic pressure. For all models, the constant domain masks provided in the dataset \cite{ohana2024well} are used as a third input channel.
\end{itemize}

\section{Additional Theorems and Corollaries}\label{appendix:proof}

\begin{proposition}[Trajectory Convergence]
\label{prop:convergence}
Let $\theta^*$ be a global minimizer of $\mathcal{L}_{\textup{total}}$
over a sufficiently expressive function class. Let $x_t = (1-t)\,x_0 + t\,x_1$ with $x_0 \sim p_0$, $x_1 \sim p_1$, and $v^* = x_1 - x_0$. Then the global minimizer of $\mathcal{L}_{\textup{pri}}$ recovers the conditional expectation
\begin{equation}\label{eq:vfm}
    v_{\theta^*}(x, t, 1) \;=\; \mathbb{E}\!\left[\, x_1 - x_0 \;\middle|\; x_t = x \,\right],
\end{equation}
and generates the correct marginal path $p_t$ for all $t \in [0,1]$~\cite{lipman2022flow}.
Jointly, the global minimizer of $\mathcal{L}_{\textup{sec}}$ satisfies, for every $\alpha \in (0,1]$ and $\tau = t/\alpha$,
\begin{equation}\label{eq:vsec_opt}
    v_{\theta^*}(x,\,\tau,\,\alpha) \;=\; \alpha\, v_{\theta^*}(x,\,t,\,1),
\end{equation}
with $\mathcal{L}_{\textup{cons}} = 0$ holding automatically at this optimum.
\end{proposition}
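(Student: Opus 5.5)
The plan is to identify $\mathcal{L}_{\textup{pri}}$ with the $i=1$ term $\mathcal{L}_{\textup{traj}}^{(1)}$ and $\mathcal{L}_{\textup{sec}}$ with the secondary terms $\sum_{i\ge 2}\mathcal{L}_{\textup{traj}}^{(i)}$. We work in the depth-$2$ setting used throughout, so the secondary scale is $\alpha$. Each term is then an $L^2$ regression problem over the joint law of $(t,\alpha,x_0,x_1)$. The key observation is \emph{conditioning-space decoupling}. With a sufficiently expressive class, the network values at distinct conditioning inputs $(x,s,a)$ can be chosen independently. Hence the pointwise minimizer of each term is a conditional expectation.

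\textbf{Primary term.} Since $v_\theta(x_t,t,1)$ is a function of $(x_t,t)$ only, we have
\begin{equation*}
\mathbb{E}\|v_\theta(x_t,t,1)-v^*\|^2=\mathbb{E}\|v_\theta(x_t,t,1)-\mathbb{E}[v^*\mid x_t,t]\|^2+\mathrm{const}.
\end{equation*}
The minimizer is therefore $v_{\theta^*}(x,t,1)=\mathbb{E}[x_1-x_0\mid x_t=x]$, which is \eqref{eq:vfm}. That this field generates the marginal path $p_t$ is exactly the marginalization theorem of \cite{lipman2022flow}, which we cite rather than reprove. It follows from the continuity equation being linear in the product $p_t u_t$.

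\textbf{Secondary term.} The input at scale $a=\alpha$ is $(x_t,\tau,\alpha)$ with $\tau=t/\alpha$. The map $(t,\alpha)\mapsto(\tau,\alpha)$ is a bijection on $\{0\le t\le\alpha\le 1\}$, which is the support of the sampling scheme $t\sim\mathcal U(0,1)$, $\alpha\sim\mathcal U(t,1)$. Conditioning on $(x_t,\tau,\alpha)$ is therefore the same as conditioning on $(x_t,t,\alpha)$. The same projection argument gives
\begin{equation*}
v_{\theta^*}(x,\tau,\alpha)=\mathbb{E}[\alpha v^*\mid x_t=x,\,t,\,\alpha]=\alpha\,\mathbb{E}[v^*\mid x_t=x,\,t],
\end{equation*}
where $\alpha$ is pulled out because it is measurable with respect to the conditioning. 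The last equality needs $\alpha$ to be conditionally independent of $(x_0,x_1)$ given $(x_t,t)$. This holds because $\alpha$ is drawn independently of $(x_0,x_1)$ given $t$. Combining with \eqref{eq:vfm} yields \eqref{eq:vsec_opt}.

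\textbf{Consistency term.} Substituting the two identities into $\mathcal{L}_{\textup{cons}}^{(i)}=\|\hat v^{(i)}-\alpha^{(i)}\hat v^{(1)}\|^2$ gives zero pointwise. So the minimizer of $\sum_i\mathcal{L}_{\textup{traj}}^{(i)}$ also minimizes the nonnegative term $\lambda\mathcal{L}_{\textup{cons}}$. It is therefore a global minimizer of $\mathcal{L}_{\textup{total}}$ for every $\lambda\ge 0$.

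\textbf{Main obstacle.} The delicate step is the converse: showing that \emph{every} global minimizer $\theta^*$ of $\mathcal{L}_{\textup{total}}$ has this form, not just that this field is one. The trajectory terms couple to the consistency term through the shared evaluation point. At the conditioning input $(x,1)$ with time $t$, the primary value also enters the consistency penalty of a secondary evaluation at the same $x_t$. The plan is to argue that the infimum of $\mathcal{L}_{\textup{total}}$ is bounded below by the infimum of the trajectory terms alone, since $\mathcal{L}_{\textup{cons}}\ge0$. That lower bound is attained by the candidate above. Any minimizer must then attain it, so it minimizes each trajectory term and is unique almost everywhere by strict convexity of the square loss. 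A further subtlety is that different $(t,\alpha)$ pairs can map to the same input $(\tau,a)$ only if they coincide, which the bijection above rules out. Identities are thus determined only on the support of the conditioning law, i.e.\ for almost every $(x,t,\alpha)$.
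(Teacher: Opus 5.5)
Your proposal is correct and follows the paper's own argument. Both treat the primary and secondary terms as $L^2$ regressions, with minimizers $\mathbb{E}[x_1-x_0\mid x_t=x]$ and $\alpha$ times that quantity, invoke Lipman et al.\ for the marginal path, and note that $\mathcal{L}_{\textup{cons}}$ then vanishes. Your extra steps fill in details the paper's short proof leaves implicit: the change of variables $(t,\alpha)\mapsto(\tau,\alpha)$, the conditional independence of $\alpha$ from $(x_0,x_1)$ given $(x_t,t)$, and the lower-bound argument showing that every global minimizer of $\mathcal{L}_{\textup{total}}$ minimizes each trajectory term almost everywhere on the support.
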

 
\begin{proof}
The loss $\mathcal{L}_{\textup{pri}} = \mathbb{E}_{t,x_0,x_1}[\| v_\theta(x_t, t, 1) - v^* \|^2]$
is a conditional regression whose unique $L^2$ minimizer is the conditional expectation~\eqref{eq:vfm}.
By the theory of Continuous Normalizing Flows~\cite{lipman2022flow}, the ODE $\frac{\mathrm{d}\psi_t}{\mathrm{d}t} = v_{\theta^*}(\psi_t, t, 1)$ with $\psi_0 \sim p_0$ generates the correct marginal $p_t$.
The secondary loss $\mathcal{L}_{\textup{sec}} = \| \hat{v}_{\textup{sec}} - \alpha\, v^* \|^2$ is minimized at
\[
    v_{\theta^*}(x_t,\, \tau,\, \alpha) \;=\; \alpha\,\mathbb{E}\!\left[ x_1 - x_0 \;\middle|\; x_t = x \right] \;=\; \alpha\, v_{\theta^*}(x_t,\, t,\, 1),
    \quad \tau = t/\alpha,
\]
so that $\mathcal{L}_{\textup{cons}} = \| \hat{v}_{\textup{sec}} - \alpha\,\hat{v}_{\textup{pri}} \|^2 = 0$ by construction.
\end{proof}

\begin{theorem}[Marginal Preservation of the Secondary Trajectory]
\label{thm:marginal}
Let $v_{\theta^*}$ be as in Proposition~\ref{prop:convergence}, and assume that $v_{\theta^*}(\cdot, t, \alpha)$ is Lipschitz in $x$ uniformly over $t$ and $\alpha$. For a fixed $\alpha \in (0,1]$, consider the secondary trajectory ODE:
\begin{equation}
    \frac{dx}{d\tau} = v_{\theta^*}(x,\,\tau,\,\alpha),
    \qquad x(0) \sim p_0.
    \label{eq:secondary_ode}
\end{equation}
Let $\{q_\tau^{(\alpha)}\}_{\tau \in [0,1]}$ denote the probability
path induced by this ODE. Then for all $\tau \in [0, 1]$,
$q_\tau^{(\alpha)}$ coincides with the marginal distribution of
$(1 - \alpha\tau)\,x_0 + \alpha\tau\, x_1$, where $x_0 \sim p_0$
and $x_1 \sim p_1$. In particular,
\begin{equation}\label{eq:marginal_form}
    q_0^{(\alpha)} = p_0, \qquad q_1^{(\alpha)} = p_\alpha,
\end{equation}
where $p_\alpha$ is the marginal distribution of
$(1-\alpha)\,x_0 + \alpha\,x_1$.
\end{theorem}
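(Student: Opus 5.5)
The plan is to reduce the secondary ODE~\eqref{eq:secondary_ode} to the primary flow by a deterministic time change, and then invoke the marginal-preservation part of Proposition~\ref{prop:convergence}.

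\textbf{Step 1: rewrite the secondary vector field.} By~\eqref{eq:vsec_opt} in Proposition~\ref{prop:convergence}, for every $\tau\in[0,1]$ and fixed $\alpha\in(0,1]$ we have
\[
v_{\theta^*}(x,\tau,\alpha)=\alpha\, v_{\theta^*}(x,\alpha\tau,1),
\]
using $t=\alpha\tau$. This identity holds wherever the primary optimum is characterized, i.e.\ on the support of the law of $x_t$. I will take it as holding for the relevant $x$, which is the standard convention of treating the minimizer as the conditional-expectation field. Hence the secondary ODE becomes
\[
\frac{dx}{d\tau}=\alpha\, v_{\theta^*}(x,\alpha\tau,1).
\]

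\textbf{Step 2: time change.} Let $\psi_t$ solve the primary ODE $\frac{d\psi_t}{dt}=v_{\theta^*}(\psi_t,t,1)$ with $\psi_0=x(0)$, and set $y(\tau):=\psi_{\alpha\tau}$. The chain rule gives
\[
\frac{dy}{d\tau}=\alpha\, v_{\theta^*}(y(\tau),\alpha\tau,1),
\]
with $y(0)=x(0)$. The Lipschitz assumption in $x$, uniform in $(t,\alpha)$, gives uniqueness of solutions by Picard--Lindel\"of. Therefore $x(\tau)=\psi_{\alpha\tau}(x(0))$ pathwise, for every initial condition, and the pushforward satisfies $q^{(\alpha)}_\tau=(\psi_{\alpha\tau})_\#p_0$.

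\textbf{Step 3: identify the marginal.} Proposition~\ref{prop:convergence} states that the primary flow generates the marginal path $p_t$ of $(1-t)x_0+tx_1$. Thus $(\psi_{\alpha\tau})_\#p_0=p_{\alpha\tau}$, which is exactly the law of $(1-\alpha\tau)x_0+\alpha\tau x_1$. Evaluating at $\tau=0$ and $\tau=1$ yields~\eqref{eq:marginal_form}. Equivalently, one can check that $q_\tau$ satisfies the continuity equation
\[
\partial_\tau q_\tau+\nabla\cdot\bigl(q_\tau\,\alpha v_{\theta^*}(\cdot,\alpha\tau,1)\bigr)=0,
\]
which is the primary continuity equation rescaled in time. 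Uniqueness of the solution then follows from the Lipschitz hypothesis.

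\textbf{Main obstacle.} The delicate point is Step 1. The optimality identity~\eqref{eq:vsec_opt} is only pinned down $p_{\alpha\tau}$-almost everywhere, while the ODE must be solved pointwise. I would handle this by noting that the trajectories stay on the support of $p_{\alpha\tau}$, so modifying the field off that support does not change the pushforward. A second subtlety is the direction of the primary flow: Proposition~\ref{prop:convergence} starts $\psi_0\sim p_0$, and the theorem's ODE starts at $p_0$ in the same way, so the orientations are consistent and no time reversal is needed.
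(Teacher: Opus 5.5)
Your proof is correct, but it is organized differently from the paper's.

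The paper never introduces the primary flow map. It proceeds in three steps:
\begin{itemize}
\item it defines the rescaled interpolant $\tilde{x}_\tau=(1-\alpha\tau)x_0+\alpha\tau x_1$ and computes its marginal velocity $u^{(\alpha)}_\tau(x)=\alpha\,\mathbb{E}[x_1-x_0\mid\tilde{x}_\tau=x]$;
\item it notes that $\tilde{x}_\tau$ is exactly $x_{\alpha\tau}$, so \eqref{eq:vfm} and \eqref{eq:vsec_opt} give $u^{(\alpha)}_\tau=\alpha\,v_{\theta^*}(\cdot,\alpha\tau,1)=v_{\theta^*}(\cdot,\tau,\alpha)$;
\item it concludes by uniqueness for the continuity equation.
\end{itemize}
In effect, the paper re-runs the flow-matching marginal argument on a new probability path.

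You instead reparametrize time, show $x(\tau)=\psi_{\alpha\tau}(x(0))$ by ODE uniqueness, and use the primary marginal statement of Proposition~\ref{prop:convergence} as a black box. Your route gives a pathwise conclusion: the secondary flow is exactly the primary flow run on $[0,\alpha]$. This is stronger than equality of marginals and needs no new conditional-expectation computation.

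The paper's continuity-equation formulation is the more natural setting for the point you flag as the main obstacle. Equations \eqref{eq:vfm} and \eqref{eq:vsec_opt} fix the field only $p_{\alpha\tau}$-a.e. The flux $p_{\alpha\tau}\,u^{(\alpha)}_\tau$ is unchanged by modifying the field on a $p_{\alpha\tau}$-null set. Hence $p_{\alpha\tau}$ also solves the continuity equation with the Lipschitz field $v_{\theta^*}(\cdot,\tau,\alpha)$, and uniqueness applies directly.

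Your support argument can also be made rigorous without circularity, but as written it is only a sketch. The full version runs as follows:
\begin{itemize}
\item the time-changed primary trajectories have law $p_{\alpha\tau}$, so by Fubini, almost surely they lie in $\operatorname{supp} p_{\alpha\tau}$ for a.e.\ $\tau$;
\item they therefore solve the secondary ODE in the Carath\'eodory sense;
\item uniqueness then identifies them with $x(\tau)$.
\end{itemize}
The continuity-equation variant you mention at the end of Step~3 is essentially the paper's proof.
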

 
\begin{proof}
Define the candidate path $\tilde{x}_\tau := (1-\alpha\tau)\,x_0 + \alpha\tau\, x_1$.
Differentiating: $\frac{\mathrm{d}\tilde{x}_\tau}{\mathrm{d}\tau} = \alpha\,(x_1 - x_0)$.
Its marginal velocity field is
\[
    u_\tau^{(\alpha)}(x) := \mathbb{E}\!\left[\frac{\mathrm{d}\tilde{x}_\tau}{\mathrm{d}\tau} \;\middle|\; \tilde{x}_\tau = x\right] = \alpha\,\mathbb{E}\!\left[ x_1 - x_0 \;\middle|\; \tilde{x}_\tau = x \right].
\]
Since $\tilde{x}_\tau$ is the linear interpolant at fraction $\alpha\tau$, it has the same distribution as $x_t$ with $t = \alpha\tau$. Substituting into~\eqref{eq:vfm} and comparing with~\eqref{eq:vsec_opt}:
\[
    u_\tau^{(\alpha)}(x) = \alpha\, v_{\theta^*}(x, \alpha\tau, 1) = v_{\theta^*}(x, \tau, \alpha).
\]
By uniqueness of solutions to the continuity equation under the Lipschitz assumption, the induced path $q_\tau^{(\alpha)}$ coincides with the marginal of $\tilde{x}_\tau$.
Setting $\tau = 0$ and $\tau = 1$ gives~\eqref{eq:marginal_form}.
\end{proof}
\paragraph{Proof of Theorem~\ref{thm:truncation}}
\begin{proof}
\textbf{Part (i).} The bound~\eqref{eq:euler} follows from the discrete Gr\"onwall inequality applied to the Euler discretization error, as used in the flow matching error analysis of Benton et al.~\cite{benton2024error}. The local truncation error of a single step of size $h$ from $\psi_s$ is $\frac{h^2}{2}\|\mathbf{a}(\psi_s, s)\| + O(h^3)$, and the global error accumulates via the Lipschitz constant $L$.
 
\medskip
\noindent\textbf{Part (ii).} Set $\alpha = 1 - \epsilon$ for small $\epsilon > 0$. Then $\tau = t/(1-\epsilon) = t + t\epsilon + O(\epsilon^2)$. Taylor-expanding the consistency residual:
\[
    v_\theta(x_t, \tau, \alpha) - \alpha\, v_\theta(x_t, t, 1)
    \;=\;
    \epsilon\!\left[\,t\,\partial_t v_\theta(x_t, t, 1) + v_\theta(x_t, t, 1) - \partial_\alpha v_\theta(x_t, t, 1)\,\right]
    + O(\epsilon^2).
\]
Therefore $\mathcal{L}_{\textup{cons}}$ penalizes, at leading order, $\| t\,\partial_t v_\theta + v_\theta - \partial_\alpha v_\theta \|^2$. Setting this to zero yields the cross-scale coherence condition, which constrains $\|\partial_t v_\theta\|$ for bounded $\|\partial_\alpha v_\theta\|$ and $\|v_\theta\|$. Since $\partial_t v_\theta$ is one of the two components of the acceleration, reducing it tightens the global error bound~\eqref{eq:euler}. Vanilla FM trains only with $\mathcal{L}_{\textup{pri}}$, which regresses $v_\theta$ onto $v^*$ pointwise at each $t$ without coupling different times, and therefore imposes no constraint on $\partial_t v_\theta$.
\end{proof}

\begin{corollary}[Consistent Few-Step Sampling]
\label{cor:few_step}
Let $v_{\theta^*}$ be as in Proposition~\ref{prop:convergence}.
 
\begin{enumerate}
    \item[\textbf{(i)}] For any $\alpha \in (0, 1]$, a single Euler step of size $\alpha$ along the \emph{primary} trajectory at $t = 0$ yields a sample whose distribution matches the endpoint of the secondary trajectory:
    \begin{equation}
        x_0 + \alpha\, v_{\theta^*}(x_0, 0, 1)
        \;\stackrel{d}{=}\;
        x_0 + \int_0^1 v_{\theta^*}\!\bigl(x(\tau), \tau, \alpha\bigr)\, d\tau
        \;\sim\; p_\alpha.
    \end{equation}
    Consequently, the family of secondary trajectories indexed by $\alpha$ provides a continuum of self-consistent few-step samplers, each producing a valid interpolant marginal $p_\alpha$ regardless of the discretization granularity.
 
    \item[\textbf{(ii)}] By Theorem~\ref{thm:truncation}, one-step generation is exact if and only if the trajectory acceleration $\mathbf{a} \equiv 0$. Since $\mathcal{L}_{\textup{cons}}$ directly penalizes $\|\partial_t v_\theta\|$, a component of $\|\mathbf{a}\|$, RecFM actively drives the trajectory toward the zero-curvature regime where few-step Euler integration is accurate. Vanilla FM imposes no such constraint, leaving trajectory curvature uncontrolled.
\end{enumerate}
\end{corollary}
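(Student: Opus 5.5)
The plan is to obtain both parts of Corollary~\ref{cor:few_step} from Proposition~\ref{prop:convergence} and Theorem~\ref{thm:marginal}, with part (ii) also using Theorem~\ref{thm:truncation}. The core device is a time change that turns every secondary ODE into a piece of the primary ODE.

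\textbf{Step 1 (time change).} By~\eqref{eq:vsec_opt}, $v_{\theta^*}(x,\tau,\alpha)=\alpha\,v_{\theta^*}(x,\alpha\tau,1)$. Let $x(\tau)$ solve~\eqref{eq:secondary_ode} and put $y(s):=x(s/\alpha)$ for $s\in[0,\alpha]$. By the chain rule,
\[
y'(s)=\tfrac{1}{\alpha}\,v_{\theta^*}(y(s),s/\alpha,\alpha)=v_{\theta^*}(y(s),s,1).
\]
So $y$ is the primary flow $\psi_s$ on $[0,\alpha]$, and $x_0+\int_0^1 v_{\theta^*}(x(\tau),\tau,\alpha)\,d\tau=\psi_\alpha(x_0)$. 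Theorem~\ref{thm:marginal}, or equivalently Proposition~\ref{prop:convergence} at $t=\alpha$, then gives that this endpoint has law $p_\alpha$. This settles the second equality in (i), and the argument works for every $\alpha\in(0,1]$.

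\textbf{Step 2 (the Euler step).} This is the step I expect to be the main obstacle. The claim is $x_0+\alpha\,v_{\theta^*}(x_0,0,1)\stackrel{d}{=}\psi_\alpha(x_0)$. My first attempt is to work at the level of the regression target that defines the optimum in Proposition~\ref{prop:convergence}. The per-sample target at $t=0$ is $x_1-x_0$, and the corresponding single step produces $(1-\alpha)x_0+\alpha x_1$. That is exactly the interpolant whose law is $p_\alpha$, and it coincides in distribution with $\tilde x_1$ from the proof of Theorem~\ref{thm:marginal}.

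To pass from the target to the learned field I would use the exact identity
\[
\psi_\alpha(x_0)-x_0-\alpha v_{\theta^*}(x_0,0,1)=\int_0^\alpha (\alpha-s)\,\mathbf a(\psi_s,s)\,ds .
\]
The transfer from the per-sample target to the marginal field is exact precisely when this remainder vanishes. With an independent coupling, $v_{\theta^*}(x_0,0,1)=\mathbb E[x_1]-x_0$, so the remainder is generally nonzero. I therefore expect the distributional identity, read with the learned marginal field, to need exactly the straightness that part (ii) is about. I would state explicitly which reading is being proved. The ``consequently'' sentence of (i) then follows from Step 1 alone, since each secondary trajectory lands on $p_\alpha$ whatever the discretization.

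\textbf{Step 3 (part (ii)).} For the ``if'' direction, set $K=1$ and $h=1$ in~\eqref{eq:euler}: if $\mathbf a\equiv0$ the right-hand side is zero. Equivalently, the remainder integral above, taken with $\alpha=1$, vanishes.

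For the ``only if'' direction, I would require exactness of the single step from every start time and every step size, which is the self-consistency sense of Section~2. The local expansion $\psi_{s+h}-\psi_s-hv=\tfrac{h^2}{2}\mathbf a(\psi_s,s)+O(h^3)$ then forces $\mathbf a(\psi_s,s)=0$ for all $s$, by dividing by $h^2$ and letting $h\to0$.

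Finally, the penalty on $\partial_t v_\theta$ is read off from the first-order expansion in Part (ii) of the proof of Theorem~\ref{thm:truncation}, which yields~\eqref{eq:pde}. For vanilla FM, the primary loss couples no distinct times, so it places no constraint on $\mathbf a$.
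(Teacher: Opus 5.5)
Your Step~1 is correct, and it is slightly sharper than what the paper has. The time change gives the pathwise identity $x(1)=\psi_\alpha(x_0)$, whereas Theorem~\ref{thm:marginal} only matches laws via uniqueness for the continuity equation. The paper gives no separate proof of this corollary. The only extra identity it supplies is $v_{\theta^*}(x_0,0,\alpha)=\alpha\,v_{\theta^*}(x_0,0,1)$ from \eqref{eq:vsec_opt}. That identity says one Euler step of size $1$ on \eqref{eq:secondary_ode} coincides pathwise with one primary Euler step of size $\alpha$; it says nothing about the exact endpoint.

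You located the problem in Step~2 but stopped short. The first $\stackrel{d}{=}$ in (i) is false under the corollary's own hypotheses, so Step~2 cannot be completed and your hedge should become a refutation. With the independent coupling of Algorithm~\ref{alg:train}, $v_{\theta^*}(x_0,0,1)=\mathbb{E}[x_1]-x_0$, so the Euler step is the deterministic map $x_0\mapsto(1-\alpha)x_0+\alpha\,\mathbb{E}[x_1]$. For $p_1=\mathcal{N}(0,I)$ this is $(1-\alpha)x_0$. By contrast, $p_\alpha$ is the law of $(1-\alpha)x_0$ convolved with $\mathcal{N}(0,\alpha^2 I)$. Convolution multiplies a characteristic function that is nonzero near $\xi=0$ by $e^{-\alpha^2|\xi|^2/2}$, so the two laws differ for every $\alpha\in(0,1]$; at $\alpha=1$ you compare a Dirac mass with $p_1$.

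Straightness cannot rescue the identity. If $\mathbf{a}\equiv0$ along the optimal flow, your remainder formula with $\alpha=1$ gives $\psi_1(x_0)=\mathbb{E}[x_1]$ for every $x_0$. That contradicts $(\psi_1)_{\#}p_0=p_1$ from Proposition~\ref{prop:convergence} unless $p_1$ is a Dirac. The per-sample reading replaces $v_{\theta^*}$ by $x_1-x_0$; it is a tautology about the interpolant, not a statement about the learned field.

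You also overclaim the ``consequently'' sentence. By the same time change, $K$ Euler steps of size $1/K$ on \eqref{eq:secondary_ode} coincide step for step with $K$ primary Euler steps of size $\alpha/K$ on $[0,\alpha]$. They therefore inherit the full primary discretization error, and $K=1$ is exactly the false identity. Only exact integration lands on $p_\alpha$.

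In (ii), your strengthening of ``exact'' to exactness from every start time and every step size is genuinely needed. You should say explicitly that the ``only if'' fails without it, since \eqref{eq:euler} is only an upper bound. In one dimension, $v(x,t)=1+\sin(2\pi t)$ has $\int_0^1 v\,dt=v(\cdot,0)=v(\cdot,1)$. So a single Euler step is exact in either direction, although $\mathbf{a}=2\pi\cos(2\pi t)\not\equiv0$.

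Your last step, however, cannot be done by citing the expansion behind \eqref{eq:pde}. Every field of the form $v(x,\tau,\alpha)=\alpha F(x,\alpha\tau)$ satisfies \eqref{eq:pde} identically for arbitrary $F$; differentiate in $\alpha$ at $\alpha=1$ to see this. The condition merely ties $\partial_\alpha v_\theta$ to $\partial_t v_\theta$, and restricts neither the primary slice $F=v_\theta(\cdot,\cdot,1)$ nor $\mathbf{a}$. At the optimum of Proposition~\ref{prop:convergence}, that primary slice is exactly the conditional expectation vanilla FM learns. Both methods therefore have the same acceleration there, and by the argument above it is not identically zero unless $p_1$ is a Dirac. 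The ``drives toward zero curvature'' clause is a heuristic about training dynamics rather than a provable consequence. What you can actually prove in (ii) is the ``if'' direction and your strengthened ``only if''.
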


\section{Additional Results}\label{appendix:additional_ablation}

\subsection{Influence of Inference Steps}\label{appendix:mse_vs_step}

We further analyze the effect of the number of inference steps in RecFM. Figure~\ref{fig:mse_vs_steps} shows the MSE as a function of the number of inference steps on the Navier-Stokes task. RecFM achieves its best performance with one- or two-step generation. We hypothesize that this behavior is due to the largely deterministic nature of physics-governed systems, where longer sampling trajectories can introduce accumulated errors.

\begin{figure}[H]
    \centering
    \includegraphics[width=0.6\linewidth]{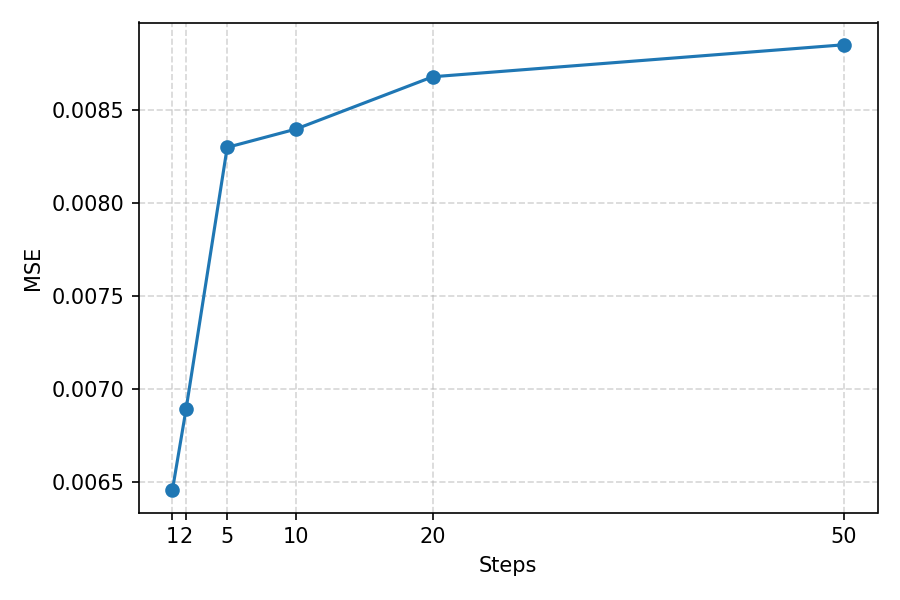}
    \vspace{-0.5em}
    \caption{\textbf{MSE vs. inference steps on the Navier-Stokes benchmark.} RecFM achieves optimal performance with one- and two-step generation, while increasing the number of steps leads to error accumulation.}
    \vspace{-1em}
    \label{fig:mse_vs_steps}
\end{figure}

\subsection{Influence of Recursion Depth $D$}\label{appendix:depth}

We study the effect of recursion depth $D$ on model performance. The depth $D$ controls the number of trajectory scales used during training, thereby governing the strength of multi-scale supervision. While RecFM particularly corresponds to the depth-$2$ case in our main experiments, higher depths introduce additional consistency constraints across more trajectory scales.

Table~\ref{tab:ablation_depth_ns} presents a comparison of different recursion depths on the Navier-Stokes benchmark. The vanilla FM ($D=1$) needs multi-step inference (5 steps) to reach acceptable performance, whereas RecFM ($D=2$) already achieves strong results with a single inference step. 
Raising the recursion depth to $D=3$ leads to slightly inferior performance compared to $D=2$, with small declines in MSE, SSR, and inference speed. In addition, training the depth-3 model requires larger memory than training the depth-2 RecFM due to the extra gradient terms introduced at depth 3. These observations indicate that enforcing pairwise consistency is sufficient to obtain the advantages of multi-scale alignment, and that further increasing the recursion depth yields diminishing returns.

Based on these results, we adopt $D=2$ as the default configuration in the main experiments, as it achieves strong performance while maintaining a simple and efficient formulation.

\begin{table}[!ht]
    \centering
    \caption{Ablation study on recursion depth $D$ for Navier-Stokes flow. Vanilla FM uses 5-step inference, while RecFM variants operate in the 1-step regime.}
    \label{tab:ablation_depth_ns}
    \small
    \begin{tabular}{lcccc}
        \toprule
        \textbf{Depth $D$} & \textbf{CRPS} ($\downarrow$) & \textbf{MSE} ($\downarrow$) & \textbf{SSR} ($\rightarrow 1$) & \textbf{Time [s]} \\
        \midrule
        $D=1$ (Vanilla FM, 5-step) 
        & 0.036 & 0.0076 & 0.911 &  6.914 \\
        
        $D=2$ (RecFM, 1-step) 
        & 0.031 & 0.0064 & 0.959 & 1.588 \\
        
        $D=3$ (extended RecFM, 1-step) 
        &  0.031 & 0.0065 & 1.091 & 1.594 \\
        \bottomrule
    \end{tabular}
\end{table}

\subsection{Additional Training Dynamics and Convergence}

We further compare validation MSE versus NFE during training for flow matching methods. Figure~\ref{fig:mse_vs_nfe_vanilla} shows that RecFM converges faster than Vanilla FM and maintains lower validation error, demonstrating improved efficiency and stability.

\begin{figure}[H]
    \centering
    \includegraphics[width=0.5\linewidth]{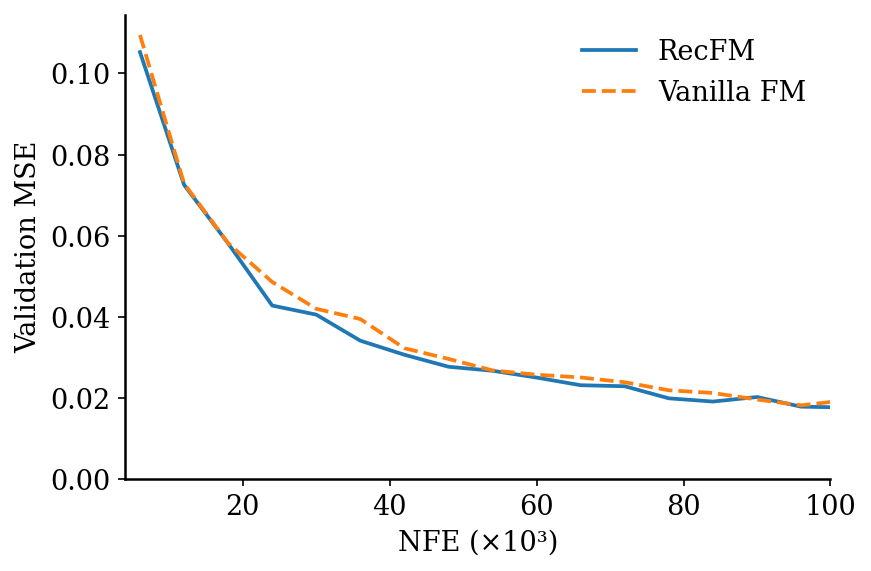}
    \caption{\textbf{Flow matching validation MSE versus NFE during training.} RecFM converges faster than Vanilla FM and maintains consistently lower validation error.}
    \label{fig:mse_vs_nfe_vanilla}
\end{figure}

\section{Architecture and Implementation Details}\label{appendix:architecture}

We adopt the state-of-the-art Hierarchical Video Diffusion Transformer (HV-DiT) backbone from VideoPDE \cite{li2025videopde}, with the sole modification that the input mask channel is removed. Unlike DYffusion \cite{ruhling2023dyffusion}, whose forecasting-oriented formulation is not naturally compatible with transformer-based diffusion backbones such as HV-DiT, RecFM operates directly on the learned velocity field and can be integrated into existing spatiotemporal diffusion architectures with minimal modification. This also makes direct comparison against the original DYffusion architecture a more appropriate and fair evaluation setting. 

During training, our recursive formulation requires $D$ forward/backward gradient evaluations per iteration due to multi-scale trajectory supervision. To maintain comparable overall training cost, we reduce the total number of training iterations by the same factor $D$ relative to other models.

A detailed overview of the model architecture and its hyperparameters (\textit{e.g.}, for the Navier-Stokes equation with $D=2$) is provided in Table~\ref{tab:hyperparameters}.

\begin{table}[ht]
    \centering
    \caption{\textbf{RecFM training and model hyperparameters of Navier-Stokes Flow.}}
    \label{tab:hyperparameters}
    \begin{tabular}{
        l
        c
    }
        \toprule
        Hyperparameter & RecFM (NS Flow) \\
        \midrule
        Parameters & 116.2M \\
        Training steps  & 40k \\
        Batch size & 64 \\
        GPUs & 4 \(\times\) L40S \\
        Mixed Precision & bfloat16 \\
        \midrule
        Patch Size (\(T\times H\times W\)) & [2, 2, 1] \\
        Neighborhood Attention Levels & 1 \\
        Global Attention Level & 1 \\
        Neighborhood Attention Depth & 2 \\
        Global Attention Depth & 11 \\
        Feature Dimensions & [384, 768] \\
        Attention Head Dimension & 64 \\
        Neighborhood Kernel Size (\(T\times H\times W\)) & [2, 7, 7] \\
        Mapping Depth & 1 \\
        Mapping Width & 768 \\
        Dropout & 0 \\
        \midrule
        Optimizer & AdamW \\
        Learning Rate & \(5\times 10^{-4}\) \\
        \([\beta_1,\beta_2]\) & \([0.9, 0.95]\) \\
        $\lambda$ & 1 \\
        Epsilon & \(1\times 10^{-8}\) \\
        Weight Decay & \(1\times10^{-2}\) \\
        \bottomrule
    \end{tabular}
\end{table}

\newpage
\section{Additional Visualizations}\label{appendix:vis}
\vspace{-0.2em}

Visualization of more timesteps of the Helmholtz Staircase equation is shown in Figure \ref{fig:helmholtz_channels}. Additionally, we visualize a representative Navier-Stokes rollout in Figure \ref{fig:ns_rollout}.

\vspace{-0.5em}
\begin{figure}[H]
    \centering
    \includegraphics[width=0.999\linewidth]{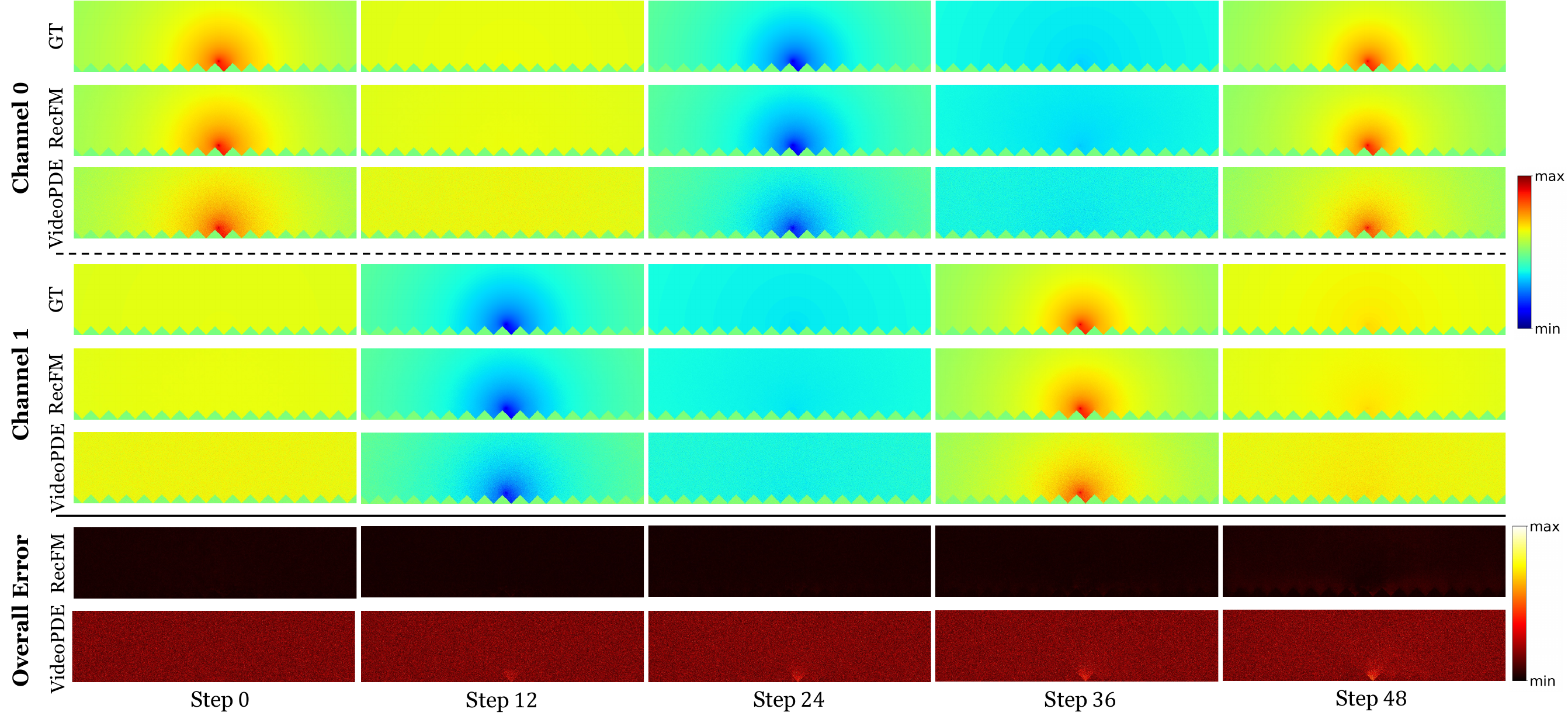}
    \caption{\textbf{More roll-out results of the Helmholtz Staircase equation.} Visual comparison of Ground Truth against RecFM and VideoPDE (best-performed baseline) for two channels, with the bottom rows indicating absolute errors. Columns correspond to dataset timesteps.}
    \vspace{-0.5em}
    \label{fig:helmholtz_channels}
\end{figure}

\begin{figure}[H]
    \centering
    \includegraphics[height=0.41\linewidth, angle=-90]{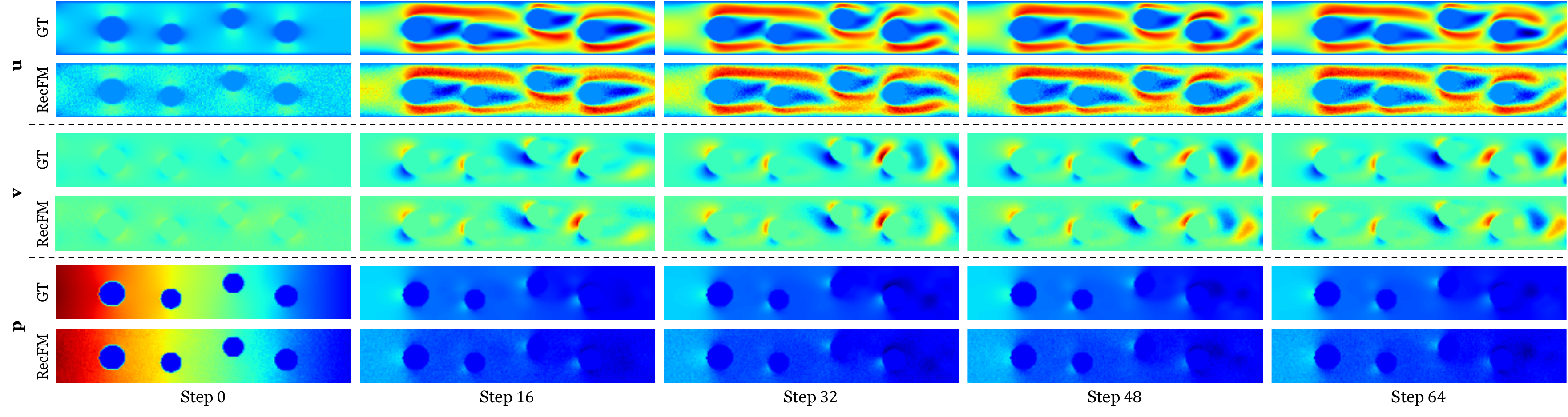}
    \caption{\textbf{Navier-Stokes rollout sample.}}
    \label{fig:ns_rollout}
\end{figure}

\section{Physics-Informed Evaluation} \label{appendix:pbfm}

We further compare RecFM with PBFM \cite{pbfm2026} on PDE-governed datasets in Table~\ref{tab:pbfm}, following the original setting of 20 inference steps. While PBFM consistently improves over vanilla Flow Matching, its iterative refinement procedure limits both efficiency and accuracy relative to RecFM. In contrast, RecFM achieves higher accuracy while requiring only 1-2 inference steps, resulting in substantially faster generation.

To assess physical consistency, we additionally report physics-informed evaluation metrics. For the Navier-Stokes dataset, the trajectories are transient and do not reach a statistically stationary regime, making standard long-time turbulence diagnostics inapplicable. Instead, we evaluate the average kinetic energy $\langle E(t) \rangle$, normalized by the initial ground-truth energy $\langle E^{t=0}_{\text{real}} \rangle$. In Table \ref{tab:pbfm}, we report KE Accuracy, which measures the relative agreement between predicted and ground-truth energy (values closer to 1 indicate better physical fidelity). The rollout of kinetic energy is shown in Figure \ref{fig:ke_rollout}. While PBFM enforces strong physical constraints at each autoregressive iteration, errors accumulate over time. In contrast, RecFM maintains stable dynamics and achieves lower overall error without noticeable accumulation.

For the Helmholtz Staircase equation, we further report the PDE residual of the wave equation $\partial^2 U / \partial t^2 + \omega^2 U = 0$ in the table, where values closer to zero indicate better adherence to the governing dynamics.

These results demonstrate that RecFM not only produces visually accurate predictions, but also more faithfully preserves the underlying physical dynamics due to its few-step nature.

\vspace{-0.5em}

\begin{table}[H]
    \centering
    \setlength{\tabcolsep}{3.3pt}
    \caption{Physics-informed quantitative forecasting results for Navier-Stokes Flow, and Helmholtz Staircase Equation. Lower values are better for MSE, CRPS, and PDE Residual, while SSR and KE Accuracy are optimal when closer to 1. Best results in \textbf{bold}.}
    \label{tab:pbfm}
    \small
    \begin{tabular}{l cccc c cccc}
        \toprule
        \multirow{2}{*}{\textbf{Method}}  & \multicolumn{5}{c}{\textbf{Navier-Stokes}} & \multicolumn{3}{c}{\textbf{Helmholtz Staircase}}\\
        \cmidrule(lr){2-6} \cmidrule(lr){7-10} 
        & \textbf{CRPS} & \textbf{MSE} & \textbf{SSR} & \textbf{KE Accuracy} & \textbf{Time [s]} & \textbf{CRPS} & \textbf{MSE} & \textbf{SSR} & \textbf{PDE Residual}  \\
        \midrule
        VideoPDE \cite{li2025videopde} & 0.033 & 0.0068 & 0.205 & 0.9670 & 72.64 & 0.026 & 5.6e-4 & 4.334 & 0.00841 \\
        Vanilla FM  & 0.036 & 0.0076 & 0.911 & 0.9522 & 6.914 & 0.030 & 6.5e-4 & 1.485 & 0.01102 \\
        PBFM \cite{pbfm2026} & 0.034 & 0.0071 & 0.810 & 0.9592 & 14.75 & 0.0094 & 1.2e-4 & 0.737 & 0.00519 \\
        \midrule
        RecFM (1-step) & \textbf{0.031} & \textbf{0.0064} & \textbf{0.959} & \textbf{0.9791} & 1.588 & 0.0034 & 4.2e-5 & \textbf{1.090} & 0.00476 \\
        RecFM (2-step) &  0.032 &  0.0068 &  0.932 & 0.9672 & 3.128 & \textbf{0.0027} & \textbf{2.7e-5} & 1.440 & \textbf{0.00457} \\
        \bottomrule
    \end{tabular}
\end{table}

\vspace{-1em}

\begin{figure}[H]
    \centering
    \includegraphics[width=0.9\linewidth]{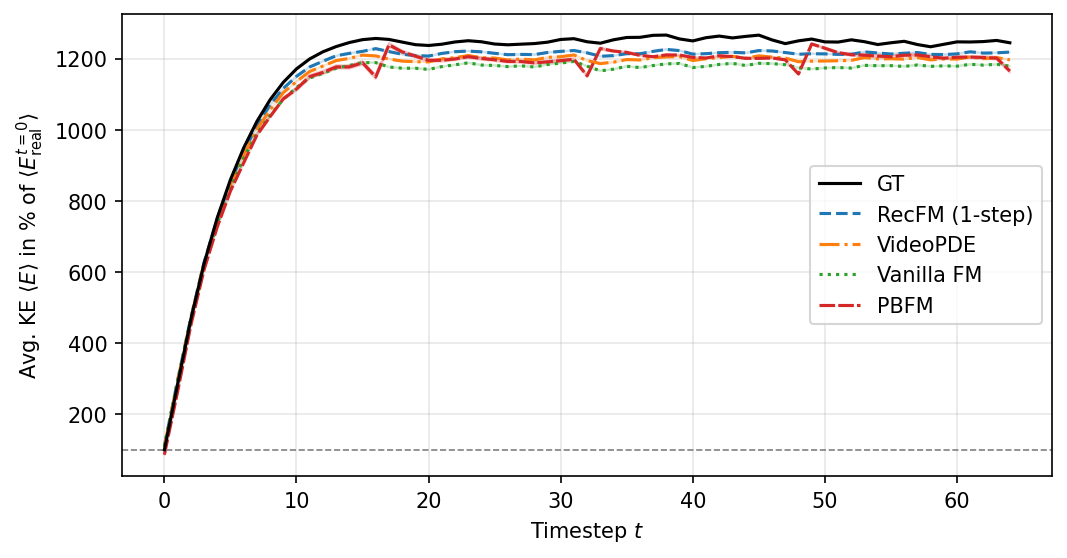}
    \caption{\textbf{Average kinetic energy over time.} $\langle E (t)\rangle$ normalized by $\langle E^{t=0}_{\text{real}} \rangle$. $100\%$ corresponds to the initial ground-truth energy.}
    \label{fig:ke_rollout}
\end{figure}

\vspace{-1.5em}

\newpage
\section{Statistical Significance}\label{appendix:stdev}
\vspace{-0.4em}
We additionally report the standard deviation of all metrics (CRPS, MSE, and SSR) for RecFM, VideoPDE, and Vanilla FM (\textit{i.e.}, all methods using the HV-DiT backbone, as shown in Tables \ref{tab:stdev_sst_ns} and \ref{tab:stdev_helmholtz}.

\begin{table}[H]
\centering
\setlength{\tabcolsep}{4pt}
\caption{Quantitative forecasting results (mean $\pm$ std) on SST and Navier--Stokes datasets. Lower CRPS and MSE are better, while SSR closer to 1 indicates better calibration.}
\label{tab:stdev_sst_ns}
\small
\begin{tabular}{l ccc ccc}
\toprule
\multirow{2}{*}{\textbf{Method}} & \multicolumn{3}{c}{\textbf{SST}} & \multicolumn{3}{c}{\textbf{Navier--Stokes}} \\
\cmidrule(lr){2-4} \cmidrule(lr){5-7}
 & CRPS $\downarrow$ & MSE $\downarrow$ & SSR $\to 1$ 
 & CRPS $\downarrow$ & MSE $\downarrow$ & SSR $\to 1$ \\
\midrule
VideoPDE \cite{li2025videopde} 
& $0.216{\scriptstyle \pm 0.005}$ & $0.162{\scriptstyle \pm 0.002}$ & $0.746{\scriptstyle \pm 0.007}$
& $0.033{\scriptstyle \pm 0.002}$ & $0.0068{\scriptstyle \pm 0.0003}$ & $0.205{\scriptstyle \pm 0.013}$ \\

Vanilla FM 
& $0.260{\scriptstyle \pm 0.007}$ & $0.232{\scriptstyle \pm 0.002}$ & $0.914{\scriptstyle \pm 0.006}$
& $0.036{\scriptstyle \pm 0.001}$ & $0.0076{\scriptstyle \pm 0.0001}$ & $0.911{\scriptstyle \pm 0.008}$ \\

\midrule
RecFM (1-step) 
& $0.217{\scriptstyle \pm 0.003}$ & $0.162{\scriptstyle \pm 0.001}$ & $0.984{\scriptstyle \pm 0.006}$
& $0.031{\scriptstyle \pm 0.001}$ & $0.0064{\scriptstyle \pm 0.0001}$ & $0.959{\scriptstyle \pm 0.003}$ \\

RecFM (2-step) 
& $0.216{\scriptstyle \pm 0.003}$ & $0.161{\scriptstyle \pm 0.001}$ & $1.004{\scriptstyle \pm 0.007}$
& $0.032{\scriptstyle \pm 0.001}$ & $0.0068{\scriptstyle \pm 0.0001}$ & $0.932{\scriptstyle \pm 0.004}$ \\
\bottomrule
\end{tabular}
\end{table}

\begin{table}[H]
\centering
\setlength{\tabcolsep}{5pt}
\caption{Quantitative forecasting results (mean $\pm$ std) on the Helmholtz Staircase dataset. Lower CRPS and MSE are better, while SSR closer to 1 indicates better calibration.}
\label{tab:stdev_helmholtz}
\small
\begin{tabular}{l ccc}
\toprule
\textbf{Method} & CRPS $\downarrow$ & MSE $\downarrow$ & SSR $\to 1$ \\
\midrule
VideoPDE \cite{li2025videopde} 
& $0.026{\scriptstyle \pm 0.001}$ & $5.6\text{e-}4{\scriptstyle \pm 1\text{e-}5}$ & $4.334{\scriptstyle \pm 0.071}$ \\

Vanilla FM 
& $0.030{\scriptstyle \pm 0.001}$ & $6.5\text{e-}4{\scriptstyle \pm 1\text{e-}5}$ & $1.485{\scriptstyle \pm 0.012}$ \\

\midrule
RecFM (1-step) 
& $0.0034{\scriptstyle \pm 0.0001}$ & $4.2\text{e-}5{\scriptstyle \pm 1\text{e-}6}$ & $1.090{\scriptstyle \pm 0.010}$ \\

RecFM (2-step) 
& $0.0027{\scriptstyle \pm 0.0001}$ & $2.7\text{e-}5{\scriptstyle \pm 1\text{e-}6}$ & $1.440{\scriptstyle \pm 0.012}$ \\
\bottomrule
\end{tabular}
\end{table}

\section{Shortcut Models vs. RecFM}
\label{app:shortcut-recfm}

Although both Shortcut Models~\citep{frans2024one} and RecFM enforce self-consistency to enable few-step generation, they impose structurally distinct constraints on the learned velocity field. Shortcut Models parameterize the network by a step-size $d$ and enforce a \emph{compositional} consistency condition: a single step of size $2d$ must produce the same result as two consecutive steps of size $d$,
\begin{equation}
    x_t + 2d \cdot v_\theta(x_t, t, 2d) = \bigl(x_t + d \cdot v_\theta(x_t, t, d)\bigr) + d \cdot v_\theta\!\bigl(x_t + d \cdot v_\theta(x_t, t, d),\; t+d,\; d\bigr).
    \label{eq:shortcut-consistency}
\end{equation}
Crucially, the right-hand side evaluates $v_\theta$ at the \emph{step-forward} state $x_t + d \cdot v_\theta(x_t, t, d)$, coupling the constraint to the spatial geometry of the trajectory. In contrast, RecFM conditions the network on a continuous scale parameter $\alpha$ and enforces a \emph{pointwise scaling} relation at the same spatial location $x_t$:
\begin{equation}
    \mathcal{L}_{\mathrm{cons}} = \bigl\| v_\theta(x_t,\, \tau,\, \alpha) - \alpha \, v_\theta(x_t,\, t,\, 1) \bigr\|_2^2, \qquad \tau = t / \alpha.
    \label{eq:recfm-consistency}
\end{equation}
No choice of the hyperparameters $(\alpha, \lambda)$ can reduce Equation~\ref{eq:recfm-consistency} to Equation~\ref{eq:shortcut-consistency}, because the former is evaluated at a single point while the latter intrinsically depends on a forward Euler update.

Despite this structural gap, the two constraints become locally equivalent in an infinitesimal limit. Setting $\alpha = 1 - \epsilon$ in the RecFM constraint and noting that $\tau = t/(1-\epsilon) \approx t + t\epsilon$, a first-order Taylor expansion of the left-hand side of Equation~\ref{eq:recfm-consistency} around $(t, 1)$ gives $v_\theta(x_t, t, 1) + t\epsilon\,\partial_t v_\theta - \epsilon\,\partial_\alpha v_\theta$, while the right-hand side becomes $(1-\epsilon)\,v_\theta(x_t, t, 1)$. Equating and dividing by $\epsilon$ yields the leading-order constraint
\begin{equation}
    t\,\partial_t v_\theta(x_t, t, 1) - \partial_\alpha v_\theta(x_t, t, 1) + v_\theta(x_t, t, 1) = 0,
    \label{eq:recfm-infinitesimal}
\end{equation}
which constrains how the velocity field varies with time and scale. The analogous expansion of the Shortcut condition~\ref{eq:shortcut-consistency} as $d \to 0^+$ instead produces a constraint involving the spatial Jacobian $\nabla_x v_\theta \cdot v_\theta$, because the forward evaluation point requires a spatial Taylor expansion. Both conditions penalize trajectory curvature at first order, but through complementary mechanisms: Shortcut Models regularize via spatial composition, while RecFM regularizes via cross-scale coherence. At the global optimum of either objective the velocity field recovers the constant OT velocity $v_\theta(x_t, t) = x_1 - x_0$, for which straight-line trajectories trivially satisfy both Equation~\ref{eq:shortcut-consistency} and Equation~\ref{eq:recfm-consistency}.

\paragraph{Intuition for the scaled velocity.} The scaled velocity is never used at inference: generation always proceeds with $\alpha = 1$. Its role is purely as a \emph{training scaffold}. A given interpolated state $x_t = (1-t)x_0 + tx_1$ lies simultaneously on a family of trajectories: the primary trajectory (from $x_0$ to $x_1$, at time $t$) and a secondary trajectory (from $x_0$ to the partial target $x_\alpha = (1-\alpha)x_0 + \alpha x_1$, at rescaled time $\tau = t/\alpha$). While the velocity scales trivially by $\alpha$, the underlying directional estimation problem, recovering $x_1 - x_0$ from the noisy state $x_t$ is shared across all scales. Each $(\tau, \alpha)$ pair therefore provides an independent supervisory signal for the same directional quantity at the same spatial point, functioning as data augmentation in the conditioning space of the network. This is particularly beneficial in the one-step regime, where the entire generation quality depends on a single evaluation of $v_\theta(x_0, 0, 1)$: RecFM enriches the gradient information at every training point through the secondary and consistency losses, whereas vanilla flow matching provides only a single regression target per sample.
Moreover, the flexibility of RecFM in selecting $\alpha$ removes the need for the warm-up phase often required in flow matching and diffusion-based shortcut models, leading to more stable and efficient training.

\paragraph{Performance Comparison on Physics Dynamics.}
In Table~\ref{tab:shortcut_helmholtz}, we compare the Shortcut Model with RecFM on the Helmholtz Staircase dataset for 1-step generation. RecFM achieves better performance, while the Shortcut Model underperforms despite hyperparameters being carefully tuned to the best of our knowledge. We note that the Shortcut Model is primarily designed for static image generation, and extending it to dynamic settings presents additional challenges.

\begin{table}[H]
    \centering
    \caption{1-Step quantitative forecasting results for Helmholtz Staircase Equation.}
    \label{tab:shortcut_helmholtz}
    \small
    \begin{tabular}{l ccc}
        \toprule
        \multirow{2}{*}{\textbf{Method}}  & \multicolumn{3}{c}{\textbf{Helmholtz Staircase}}\\
        \cmidrule(lr){2-4}
        & \textbf{CRPS} & \textbf{MSE} & \textbf{SSR}  \\
        \midrule
        Shortcut Model \cite{frans2024one} & 0.0144 & 1.6e-4 & 0.467 \\
        RecFM & 0.0034 & 4.2e-5 & 1.090 \\
        \bottomrule
    \end{tabular}
\end{table}

\newpage
\section{Recursive Flow Matching for Image Generation}\label{appendix:img}

Table \ref{tab:fid_comparison} presents the image generation performance of our proposed RecFM-XL model compared to other generative baselines. We report the Fréchet Inception Distance (FID) and evaluate all models with classifier-free guidance (CFG = 1.5) on the ImageNet-1k dataset \cite{deng2009imagenet}. RecFM achieves an FID $<3$ within 16 sampling steps. Our results show that RecFM is competitive in image generation as a multi-step flow matching method, while requiring fewer training epochs and inference steps compared with DiT \cite{peebles2023scalable} and SiT \cite{ma2024sit}. Notably, RecFM performs better with 16 inference steps than with 128, which we attribute to its training objective that emphasizes few-step generation, limiting improvements from additional steps.

\begin{table}[!ht]
    \centering
    \caption{Comparison of generative models under different sampling regimes.}
    \label{tab:fid_comparison}
    \small
    \begin{tabular}{lcccc}
        \toprule
        \textbf{Model} & \textbf{FID} $\downarrow$ & \textbf{Sampling Steps} & \textbf{Param Count} & \textbf{Epochs Trained} \\
        \midrule
        DiT-XL \cite{peebles2023scalable} & 2.27 & 500 & 675M & 640 \\
        SiT-XL \cite{ma2024sit} & 2.06 & 250 & 675M & 640 \\
        ADM-G \cite{dhariwal2021diffusion} & 4.59 & 250 & -- & 426 \\
        LDM-4-G \cite{rombach2022high} & 3.6 & 500 & 400M & 106 \\
        Shortcut Model (XL) \cite{frans2024one} & 3.8 & 128 & 676M & 250 \\
        RecFM-XL & 2.53 & 128 & 675M & 160 \\
        RecFM-XL & 2.49 & 16 & 675M & 160 \\
        RecFM-XL & 3.22 & 8 & 675M & 160 \\
        \bottomrule
    \end{tabular}
\end{table}

We further include some visualizations of RecFM-XL model with CFG, as shown in Figures \ref{fig:rec_fm_selected}, \ref{fig:recfm_samples}, \ref{fig:recfm_samples_continued}.

\begin{figure}[!t]
    \centering
    \includegraphics[width=0.99\linewidth]{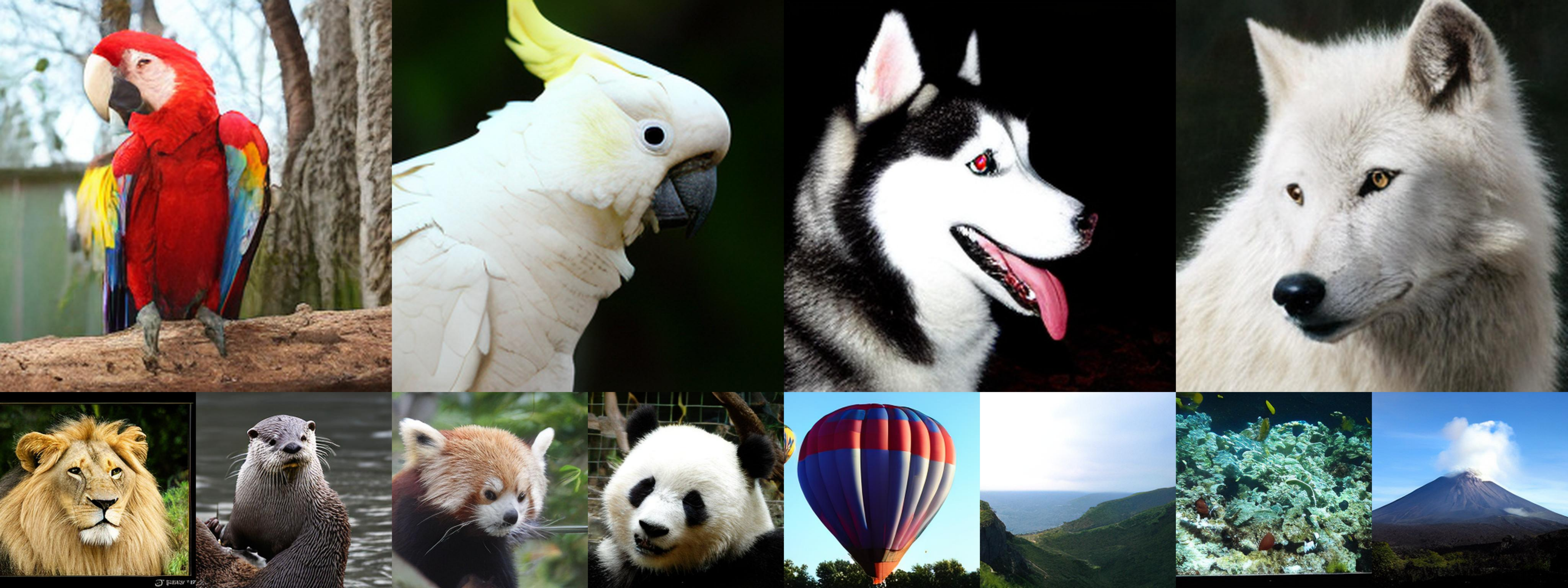}
    \caption{\textbf{Selected samples from our $256\times 256$ resolution RecFM-XL model.}}
    \label{fig:rec_fm_selected}
\end{figure}

\begin{figure}[!h]
    \centering
    \begin{subfigure}[t]{0.48\linewidth}
        \centering
        \includegraphics[width=\linewidth]{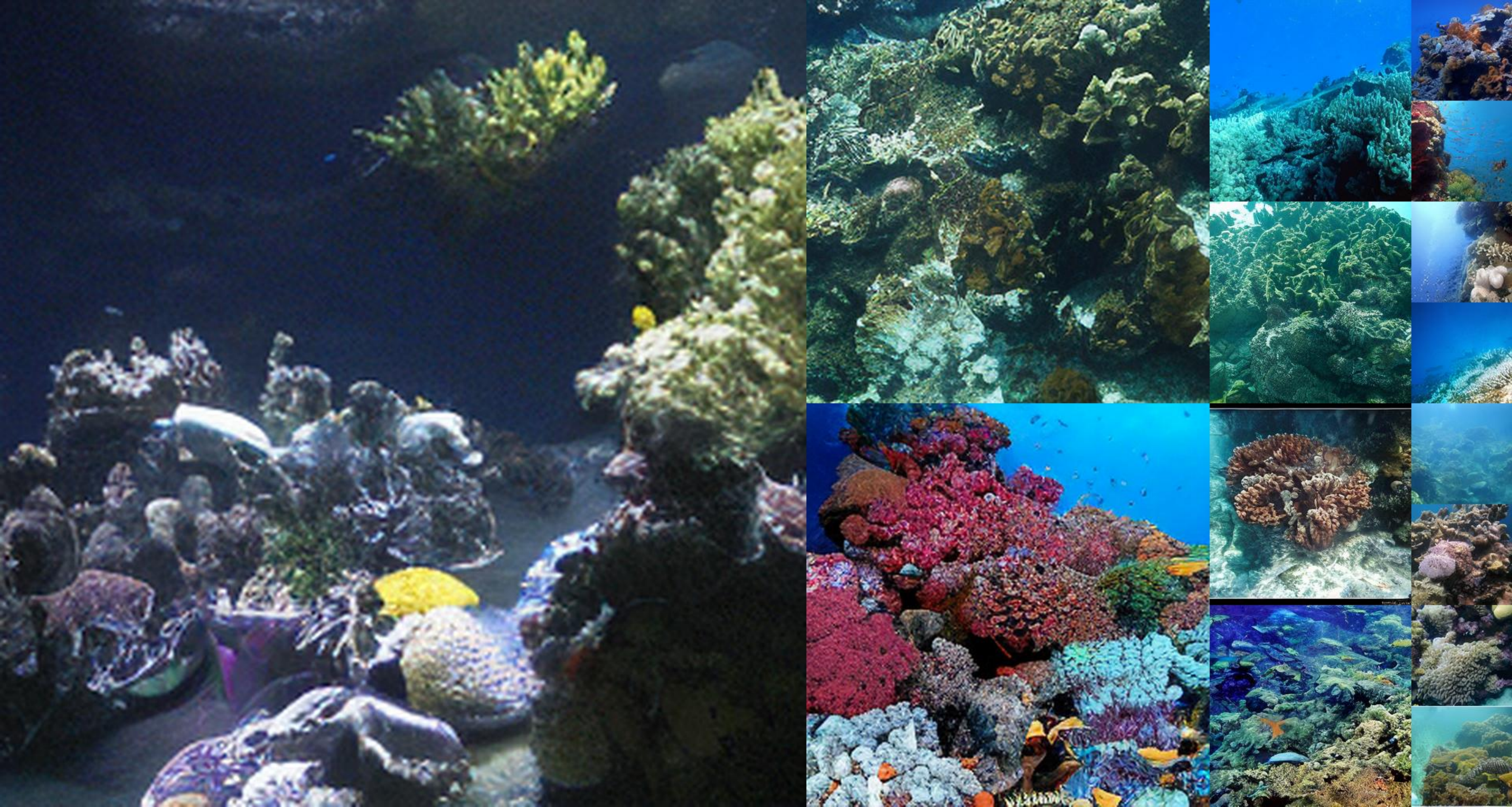}
        \caption{Coral reef (973)}
        \label{fig:rec_fm_973}
    \end{subfigure}
    \hfill
    \begin{subfigure}[t]{0.48\linewidth}
        \centering
        \includegraphics[width=\linewidth]{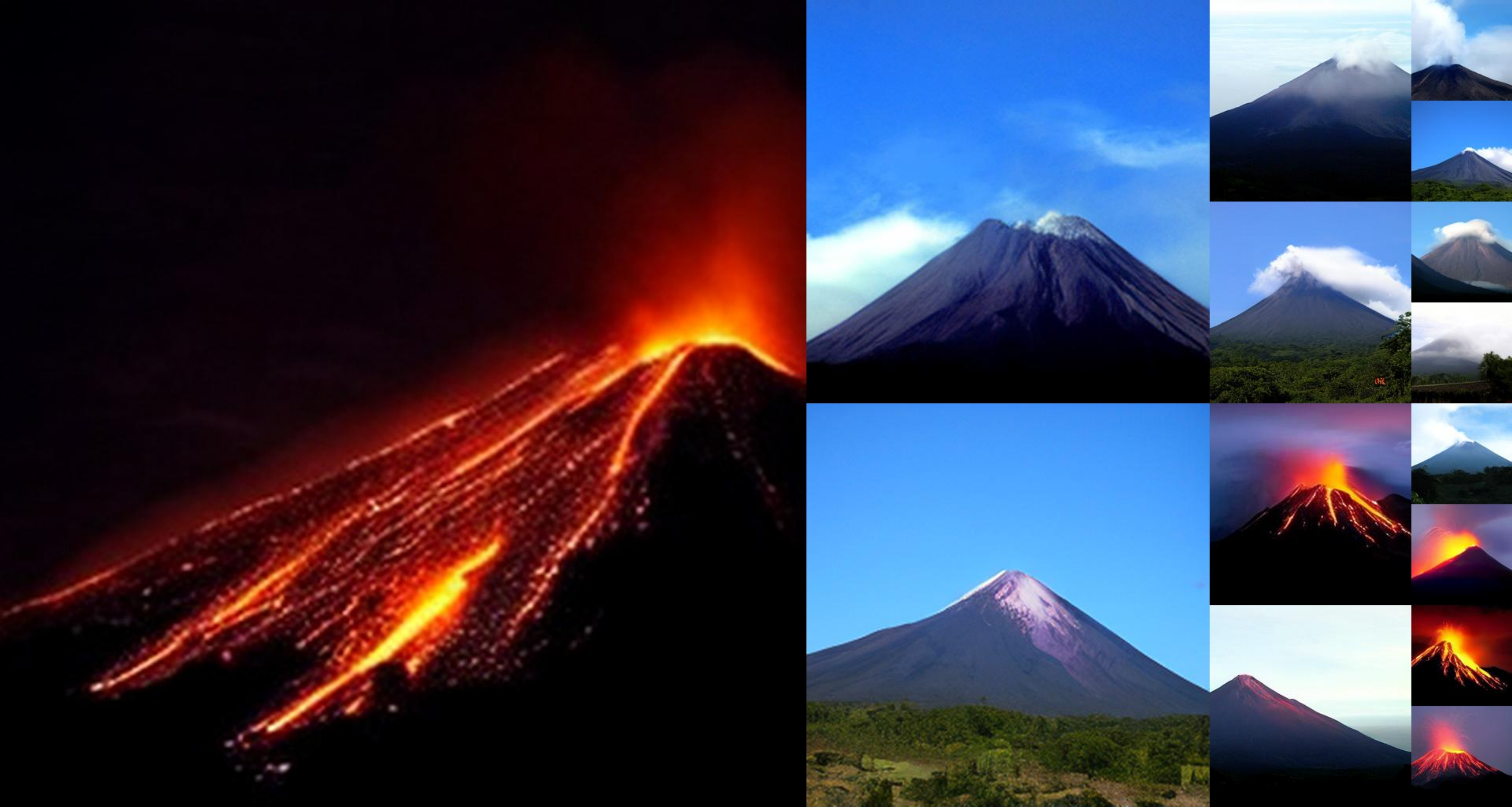}
        \caption{Volcano (980)}
        \label{fig:rec_fm_980}
    \end{subfigure}

        \caption{\textbf{Uncurated $256\times256$ RecFM-XL samples.} Each panel shows samples from a different ImageNet class.}
    \label{fig:recfm_samples}
\end{figure}

\begin{figure}[!h]
    \centering
    \begin{subfigure}[t]{0.48\linewidth}
        \centering
        \includegraphics[width=\linewidth]{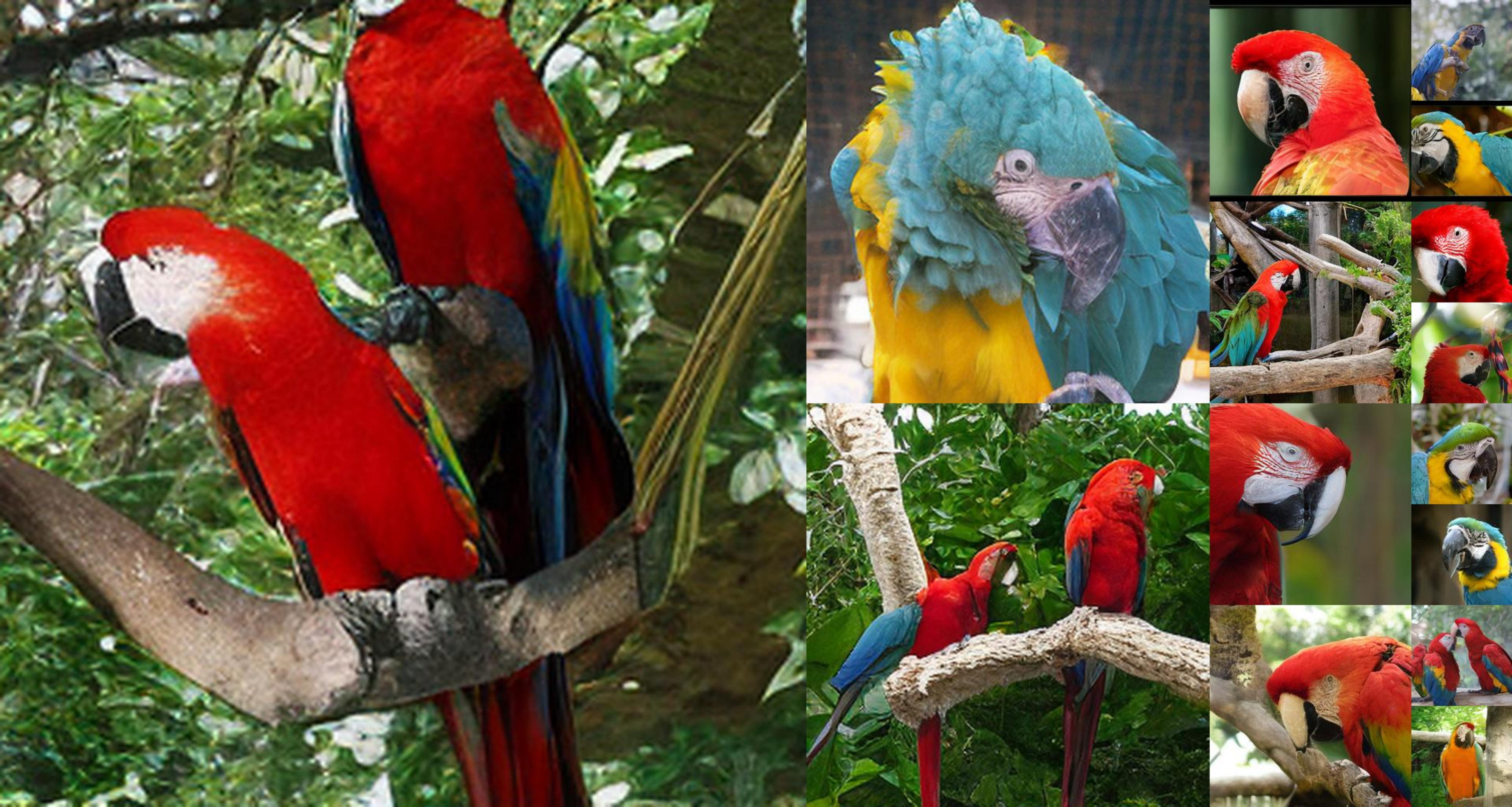}
        \caption{Macaw (88)}
        \label{fig:rec_fm_88}
    \end{subfigure}
    \hfill
    \begin{subfigure}[t]{0.48\linewidth}
        \centering
        \includegraphics[width=\linewidth]{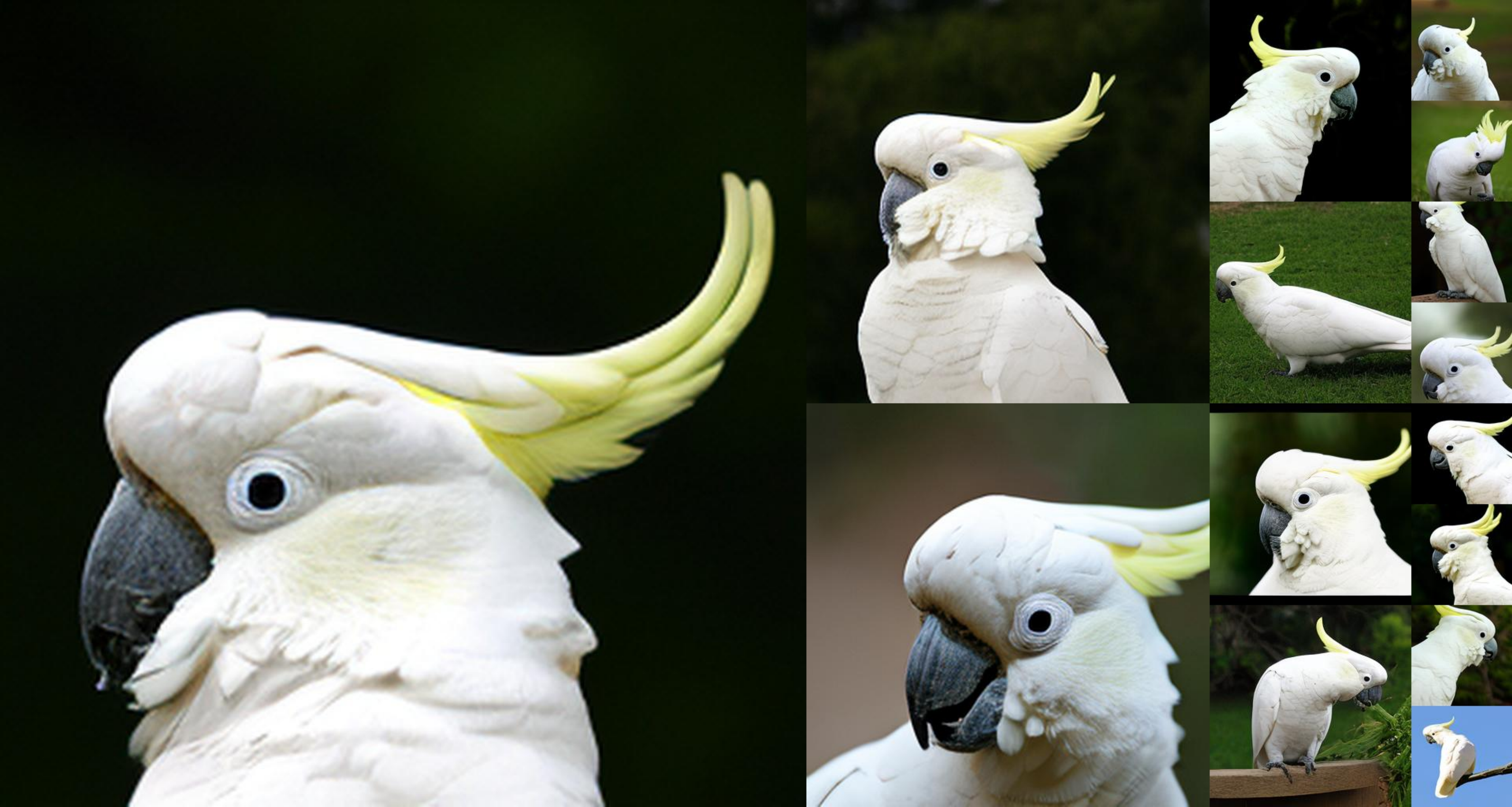}
        \caption{Sulphur-crested cockatoo (89)}
        \label{fig:rec_fm_89}
    \end{subfigure}

    \begin{subfigure}[t]{0.48\linewidth}
        \centering
        \includegraphics[width=\linewidth]{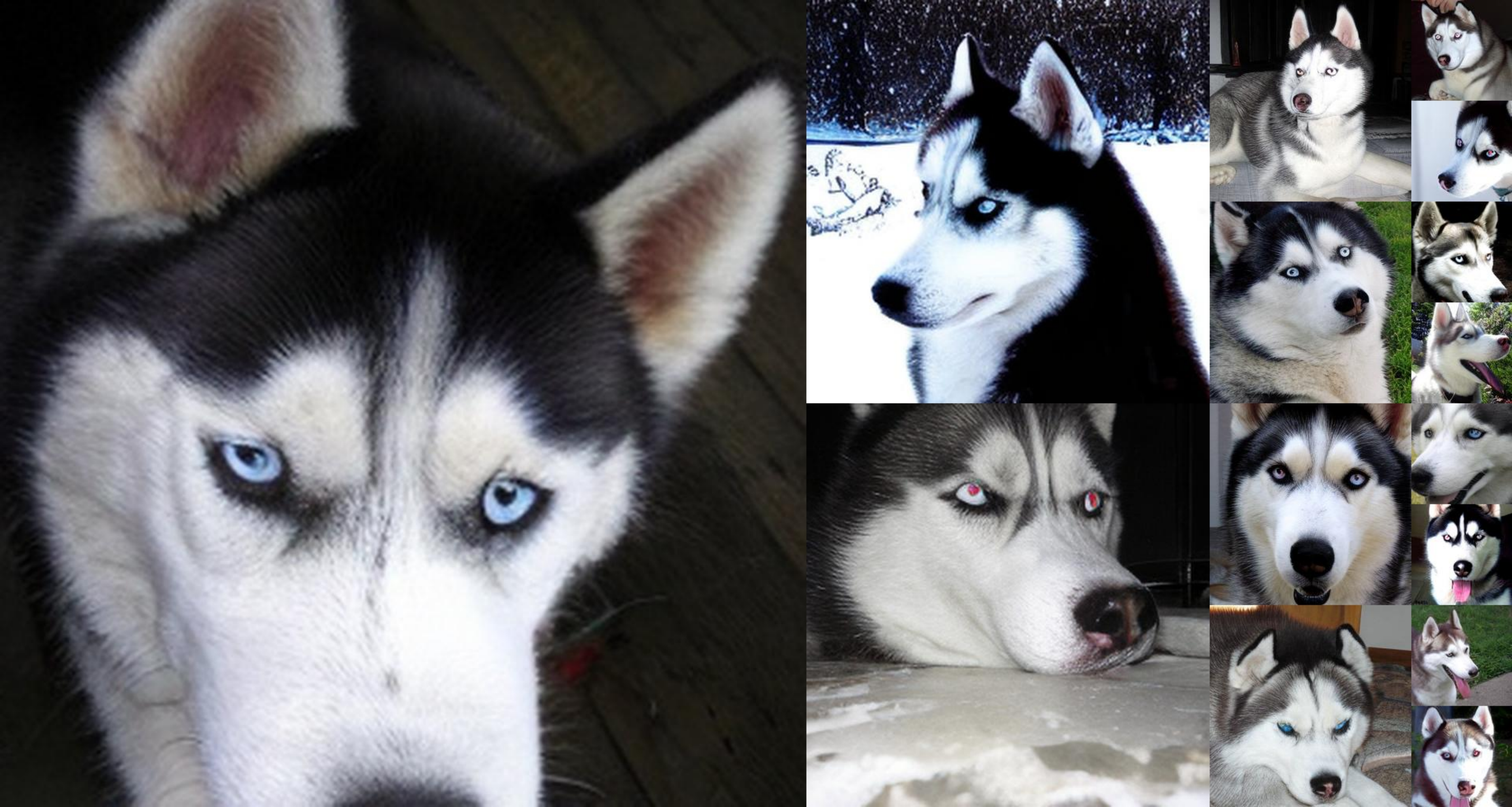}
        \caption{Husky (250)}
        \label{fig:rec_fm_250}
    \end{subfigure}
    \hfill
    \begin{subfigure}[t]{0.48\linewidth}
        \centering
        \includegraphics[width=\linewidth]{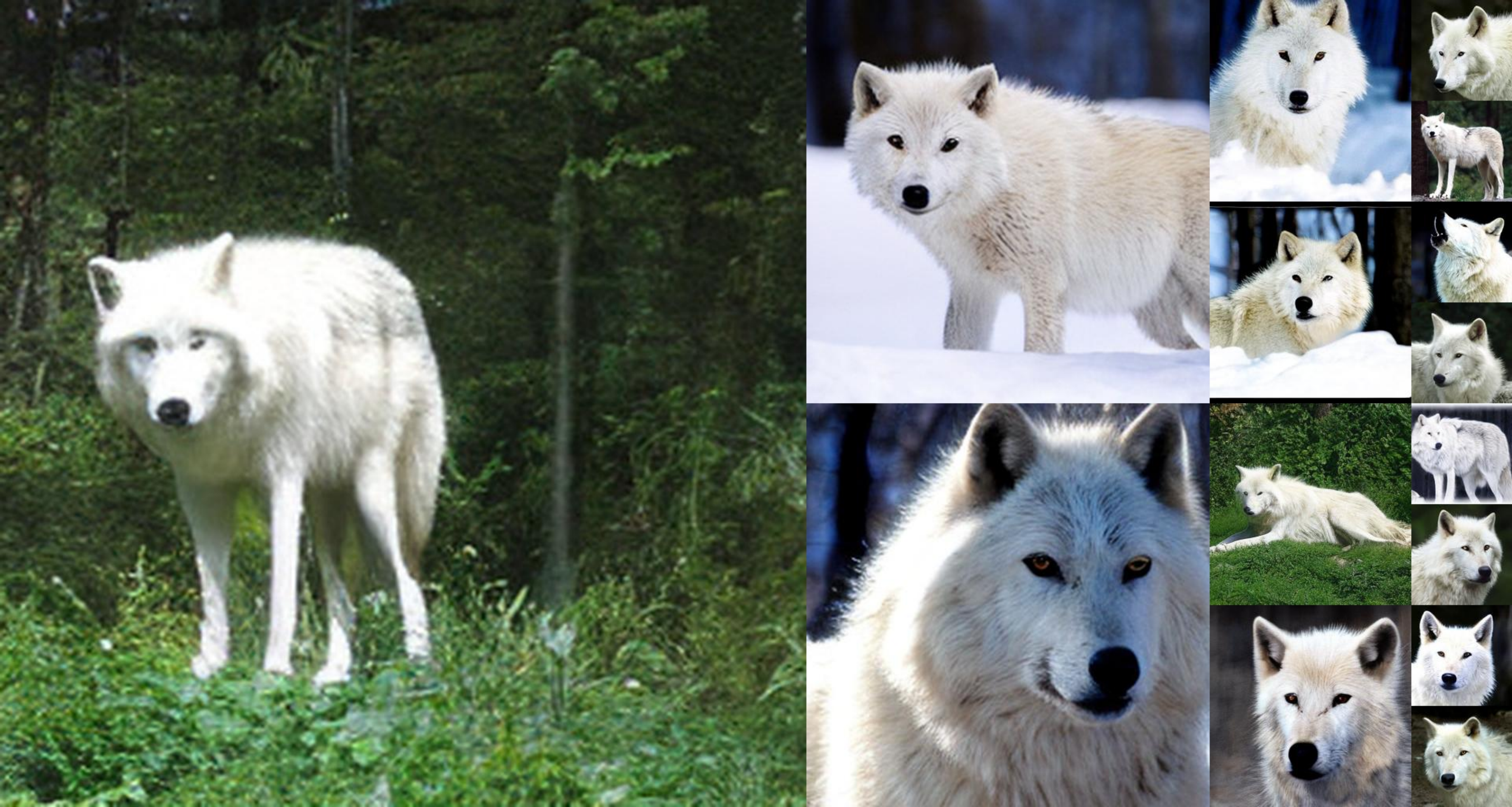}
        \caption{Arctic wolf (270)}
        \label{fig:rec_fm_270}
    \end{subfigure}
    \begin{subfigure}[t]{0.48\linewidth}
        \centering
        \includegraphics[width=\linewidth]{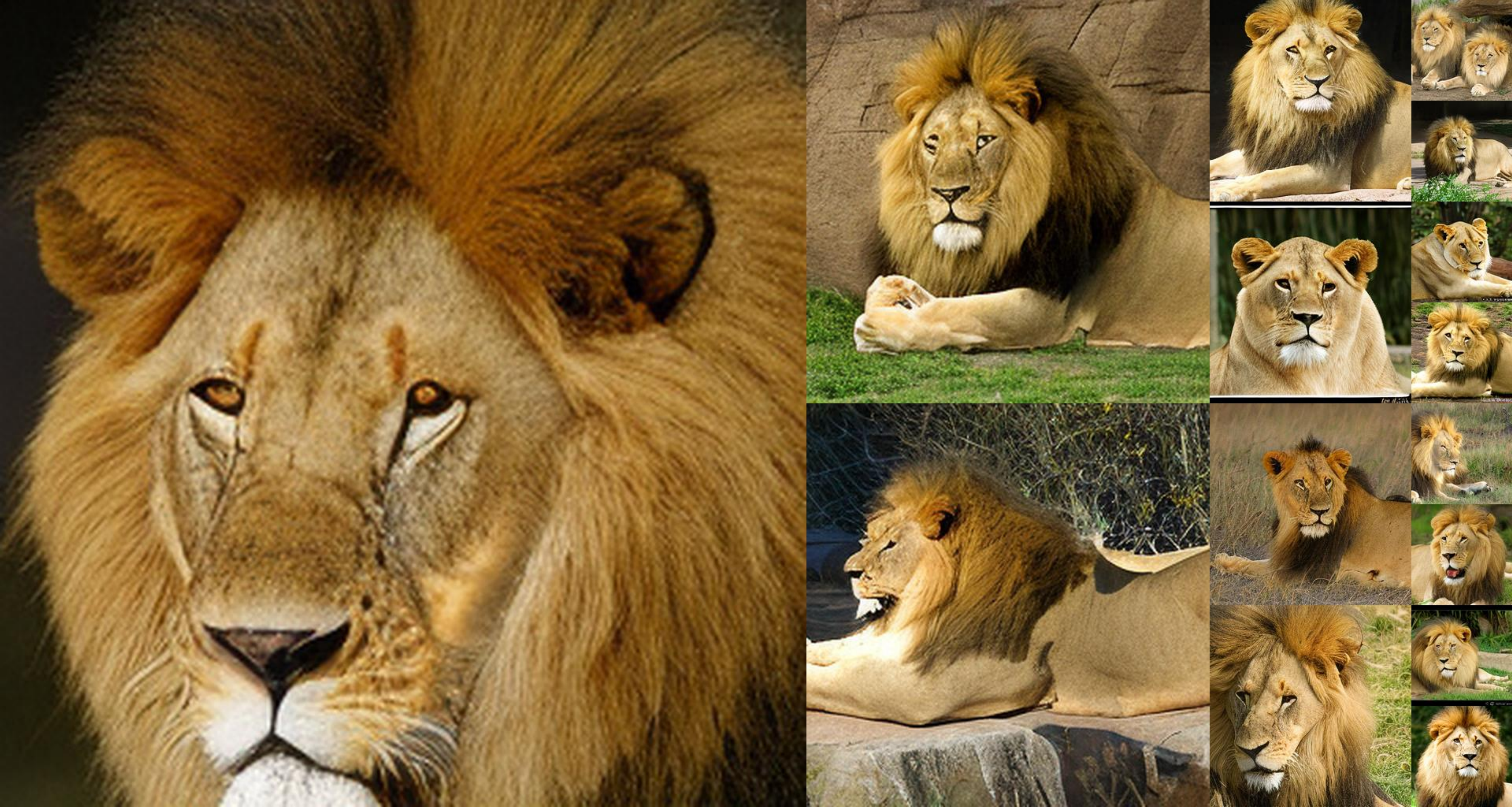}
        \caption{Lion (291)}
        \label{fig:rec_fm_291}
    \end{subfigure}
    \hfill
    \begin{subfigure}[t]{0.48\linewidth}
        \centering
        \includegraphics[width=\linewidth]{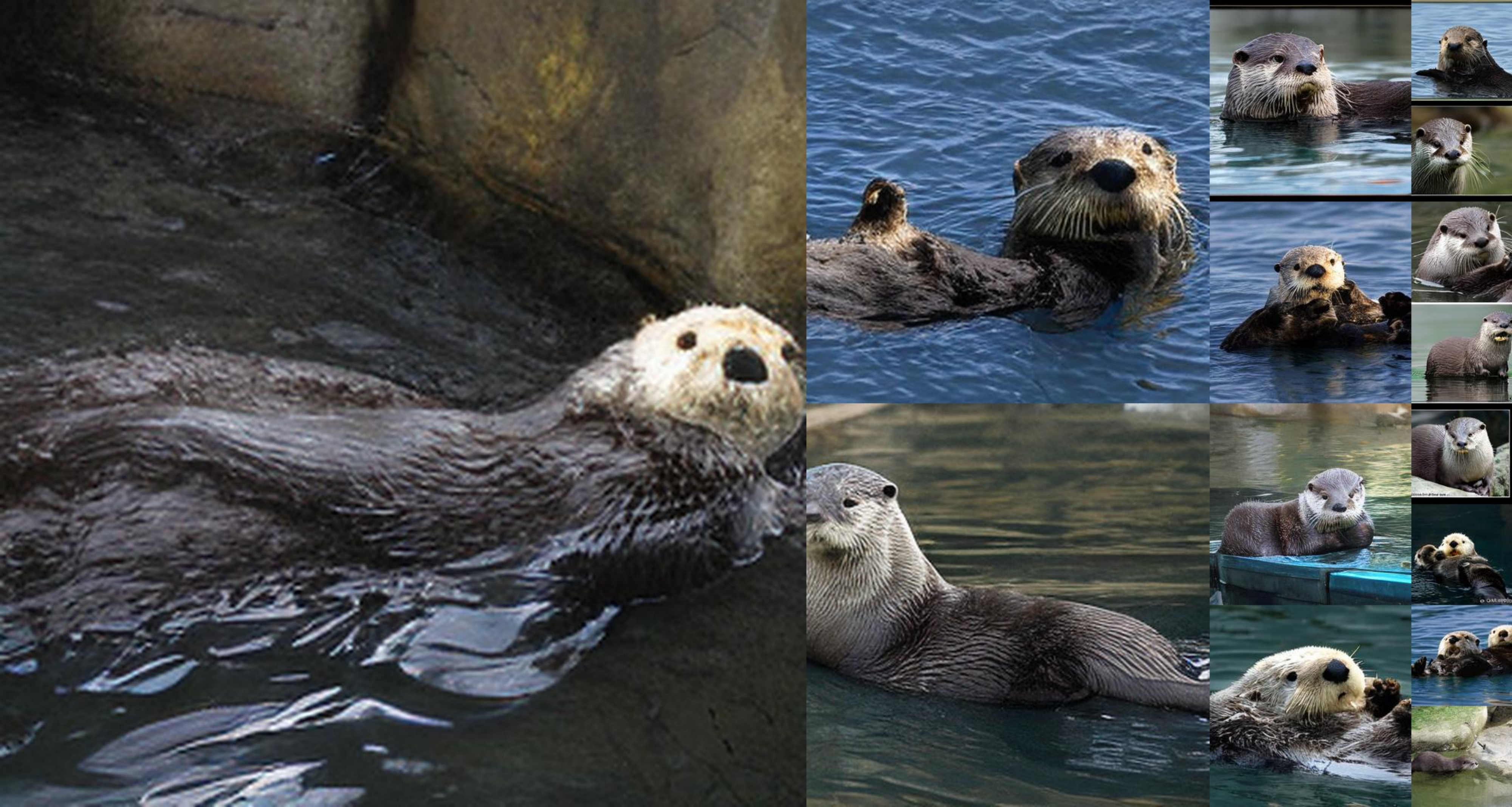}
        \caption{Otter (360)}
        \label{fig:rec_fm_360}
    \end{subfigure}
    \begin{subfigure}[t]{0.48\linewidth}
        \centering
        \includegraphics[width=\linewidth]{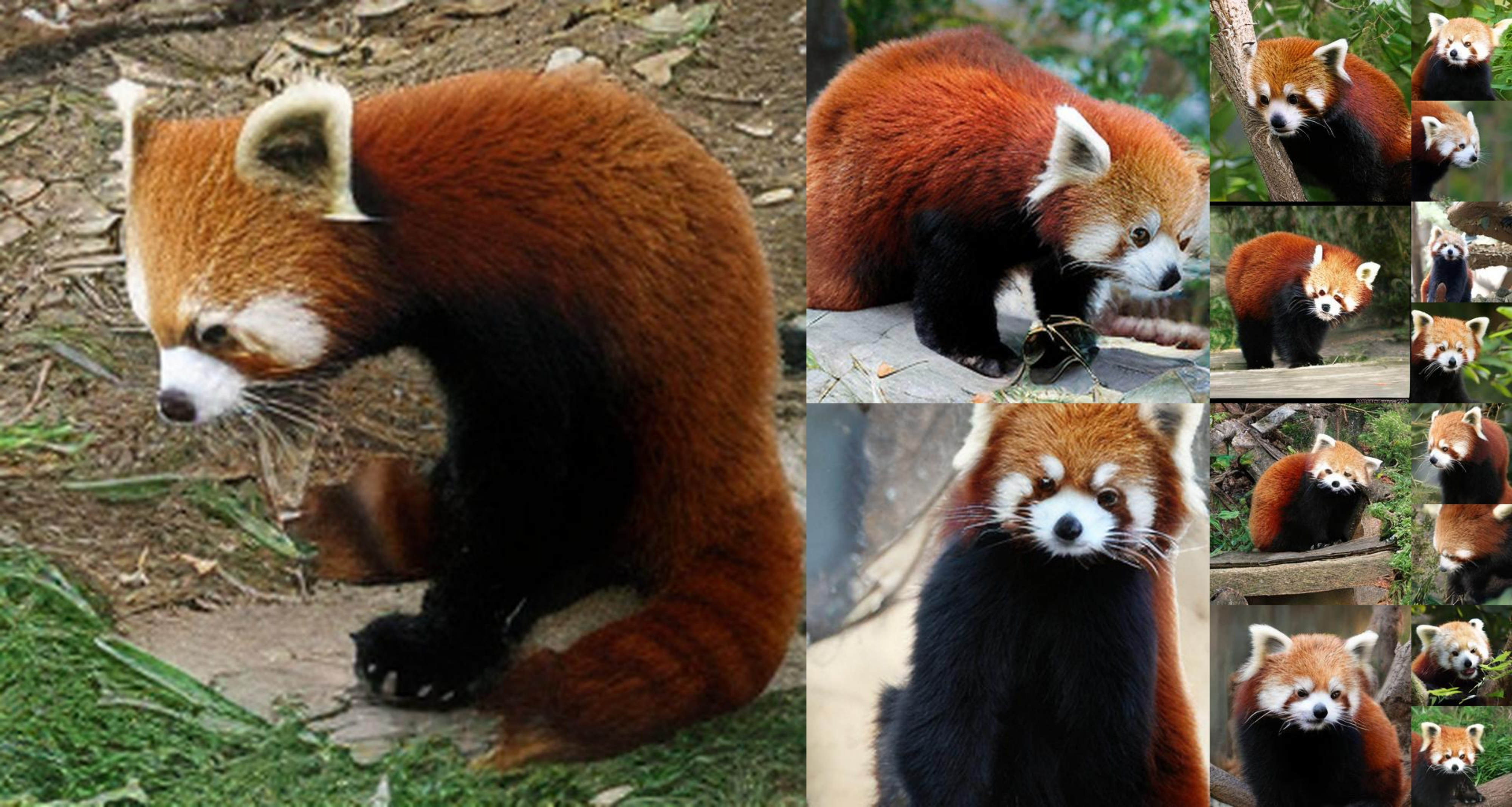}
        \caption{Red panda (387)}
        \label{fig:rec_fm_387}
    \end{subfigure}
    \hfill
    \begin{subfigure}[t]{0.48\linewidth}
        \centering
        \includegraphics[width=\linewidth]{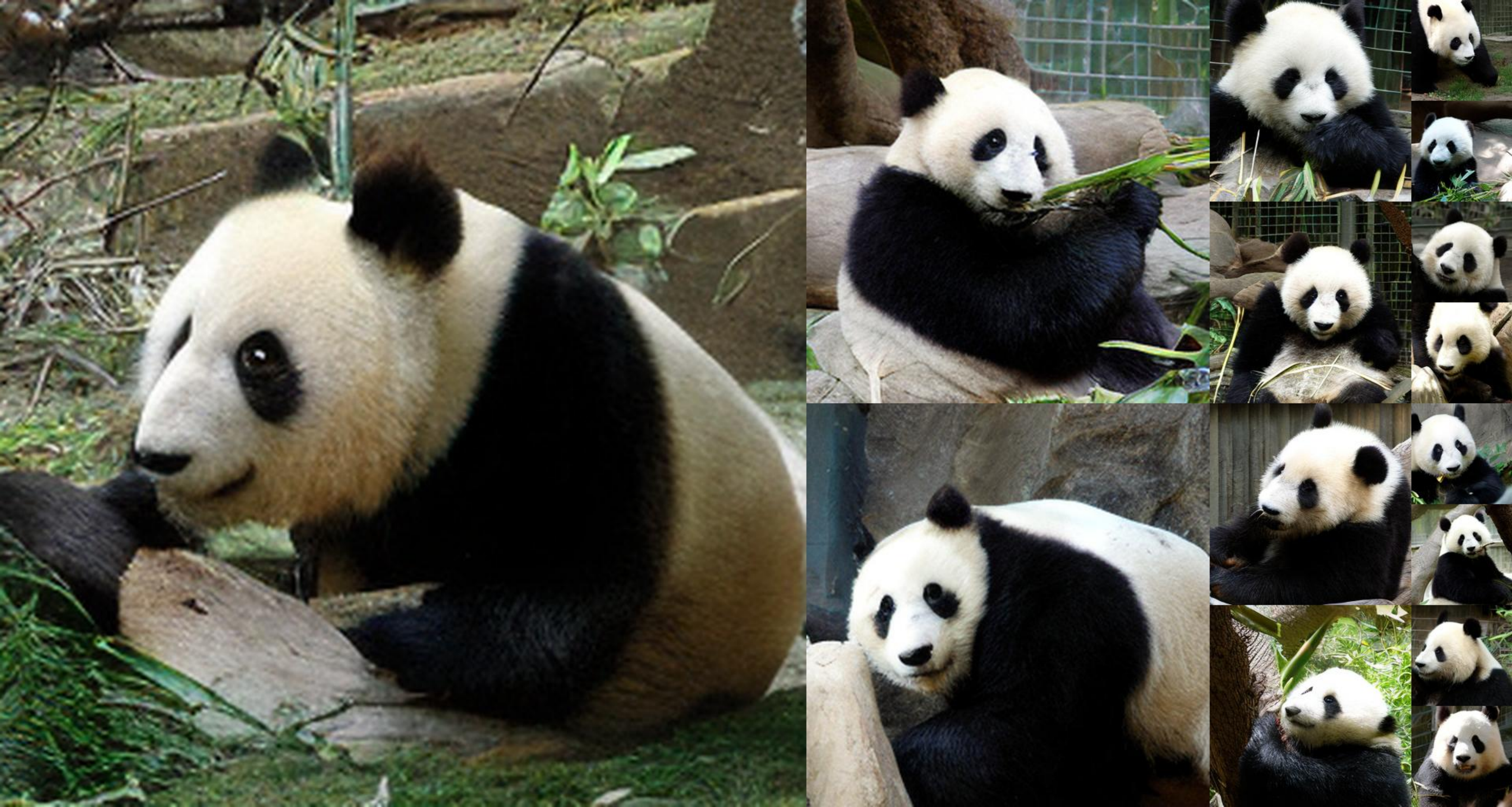}
        \caption{Panda (388)}
        \label{fig:rec_fm_388}
    \end{subfigure}
    \begin{subfigure}[t]{0.48\linewidth}
        \centering
        \includegraphics[width=\linewidth]{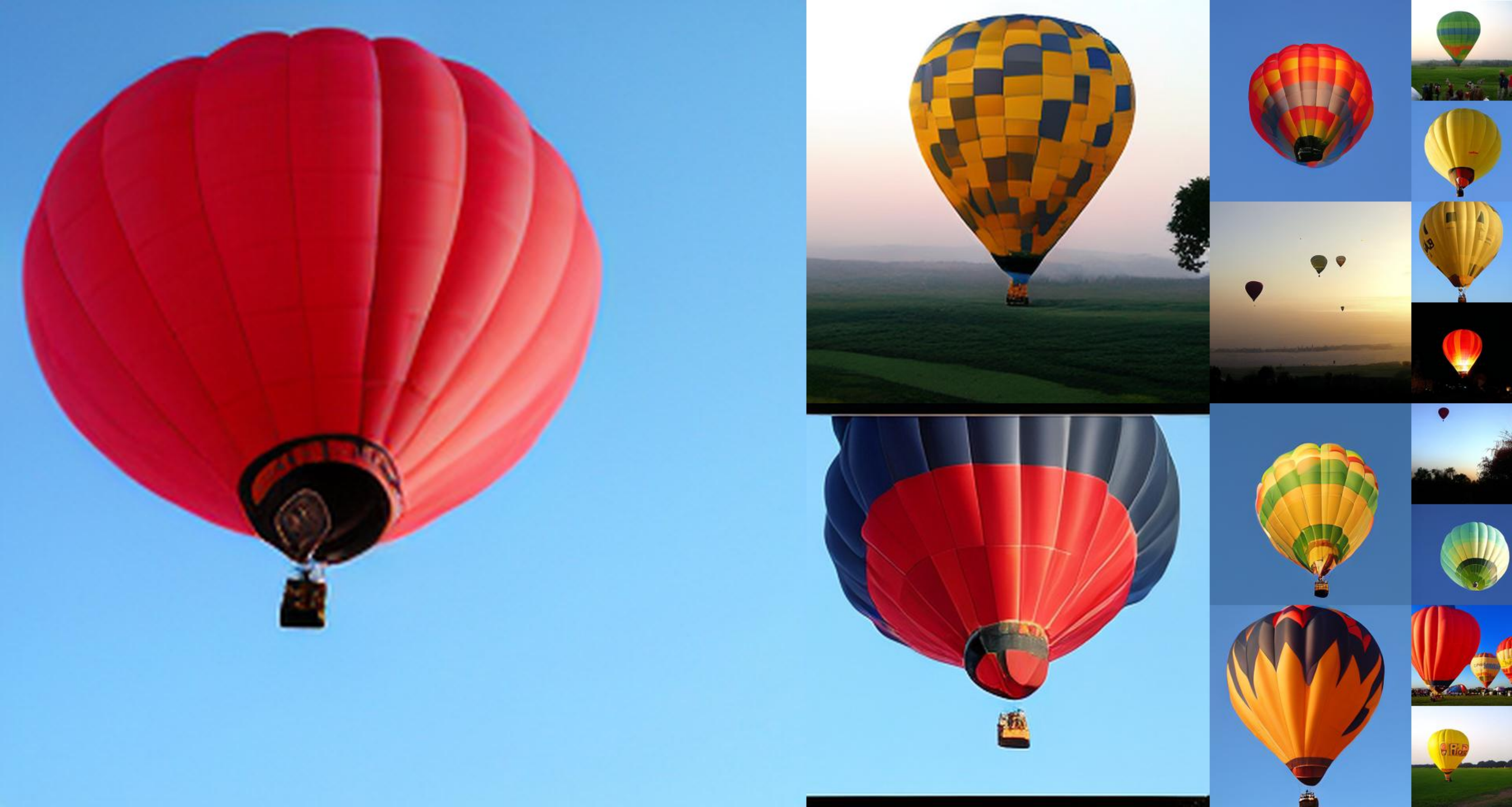}
        \caption{Balloon (417)}
        \label{fig:rec_fm_417}
    \end{subfigure}
    \hfill
    \begin{subfigure}[t]{0.48\linewidth}
        \centering
        \includegraphics[width=\linewidth]{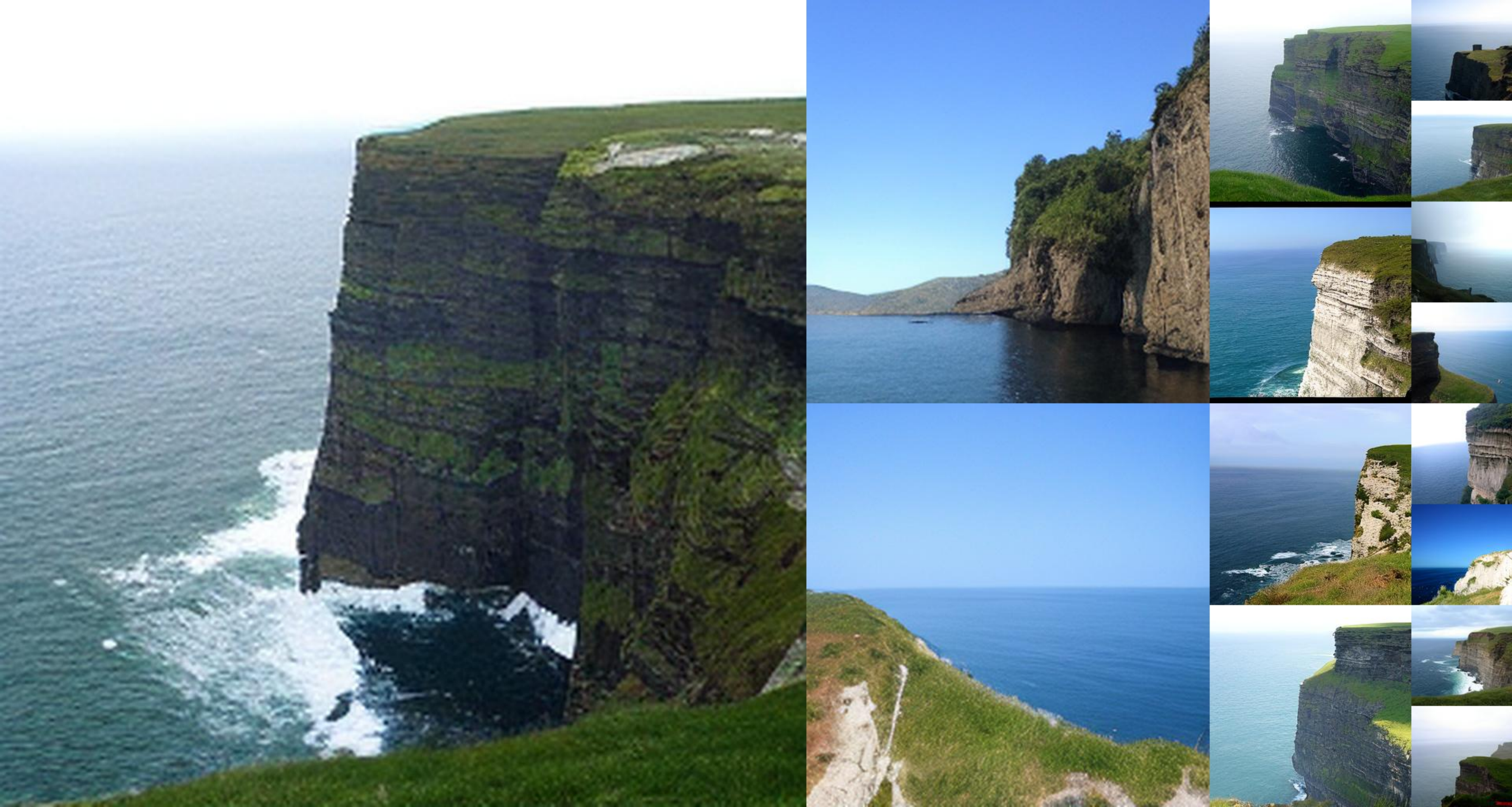}
        \caption{Cliff drop-off (972)}
        \label{fig:rec_fm_972}
    \end{subfigure}

    \caption{\textbf{Uncurated $256\times256$ RecFM-XL samples (Continued).} Each panel shows samples from a different ImageNet class.}
    \label{fig:recfm_samples_continued}
\end{figure}

\end{document}